\newenvironment{myproof}
{\par\noindent\textit{Proof}\ \enspace\ignorespaces\begin{allowdisplaybreaks}}
{\end{allowdisplaybreaks}\hspace{\stretch{1}}$\square$}
\begin{document}

\title{Analysis of Noisy Evolutionary Optimization When Sampling Fails\thanks{A preliminary version of this paper has appeared at GECCO'18~\cite{qian2018gecco}.}
}
\subtitle{}

\titlerunning{Analysis of Noisy Evolutionary Optimization When Sampling Fails}        

\author{Chao Qian$^{1}$         \and
        Chao Bian$^{1}$         \and
        Yang Yu$^{1}$         \and
        Ke Tang$^{2}$     \and
        Xin Yao$^{2}$
}

\authorrunning{Chao Qian et al.} 


\institute{1\quad National Key Laboratory for Novel Software Technology, Nanjing University, Nanjing 210023, China\vspace{0.3em}\\
2\quad Department of Computer Science and Engineering, Southern University of Science and Technology, Shenzhen 518055, China}

\date{Received: date / Accepted: date}

\maketitle

\begin{abstract}
In noisy evolutionary optimization, sampling is a common strategy to deal with noise. By the sampling strategy, the fitness of a solution is evaluated multiple times (called \emph{sample size}) independently, and its true fitness is then approximated by the average of these evaluations. Most previous studies on sampling are empirical, and the few theoretical studies mainly showed the effectiveness of sampling with a sufficiently large sample size. In this paper, we theoretically examine what strategies can work when sampling with any fixed sample size fails. By constructing a family of artificial noisy examples, we prove that sampling is always ineffective, while using parent or offspring populations can be helpful on some examples. We also construct an artificial noisy example to show that when using neither sampling nor populations is effective, a tailored adaptive sampling (i.e., sampling with an adaptive sample size) strategy can work. These findings may enhance our understanding of sampling to some extent, but future work is required to validate them in natural situations.
\keywords{Noisy optimization \and evolutionary algorithms \and sampling \and population \and adaptive sampling \and running time analysis}
\end{abstract}

\section{Introduction}

Evolutionary algorithms (EAs) are a type of general-purpose randomized optimization algorithms, inspired by natural evolution. They have been applied to solve various real-world optimization problems~\cite{li2018path,mukhopadhyay2013survey,xu2019efficient,zhou2019evolutionary}, which are often subject to noise. Sampling is a popular strategy for dealing with noise: to estimate the fitness of a solution, it evaluates the fitness multiple ($m$) times (called \emph{sample size}) independently and then uses the sample average to approximate the true fitness. Sampling reduces the variance of noise by a factor of $m$, but also increases the computation time for the fitness estimation of a solution by $m$ times. Previous studies mainly focused on the empirical design of efficient sampling methods, e.g., adaptive sampling~\cite{branke2004sequential,cantu2004adaptive}, which dynamically decides the sample size $m$ for each solution in each generation. The theoretical analysis on sampling was rarely touched.

Due to their sophisticated behaviors of mimicking natural phenomena, the theoretical analysis of EAs is difficult. Much effort thus has been devoted to understanding the behavior of EAs from a theoretical viewpoint~\cite{auger2011theory,neumann2010bioinspired}, but most of such works focus on noise-free optimization. The presence of noise further increases the randomness of optimization, and thus also increases the difficulty of analysis.

For running time analysis (one essential theoretical aspect) in noisy evolutionary optimization, only a few results have been reported. The classic (1+1)-EA algorithm was first studied on the OneMax and LeadingOnes problems under various noise models~\cite{qian2018ppsn,doerr2018gecco,droste2004analysis,giessen2014robustness,qian2018noise,dirk2018gecco}, including one-bit noise which flips a random bit of a binary solution before evaluation with probability $p \in [0,1]$, and additive Gaussian noise which adds a value randomly drawn from the Gaussian distribution. The results showed that the (1+1)-EA is efficient only under low noise levels, e.g., for the (1+1)-EA solving OneMax in the presence of one-bit noise, the maximal noise level of allowing a polynomial running time is $O((\log n)/n)$, where the noise level is characterized by the noise probability $p$, and $n$ is the problem size. Later studies mainly proved the robustness of different strategies to noise, including using populations~\cite{dang2015efficient,doerr2018gecco,giessen2014robustness,prugel2015run,dirk2018gecco}, sampling~\cite{qian2018noise,qian2016sampling} and threshold selection~\cite{qian2015noise}. For example, the ($\mu$+1)-EA with $\mu \geq 12\ln(15n)$~\cite{giessen2014robustness}, the (1+$\lambda$)-EA with $\lambda \geq 24n\ln n$~\cite{giessen2014robustness}, the (1+1)-EA using sampling with $m=3$~\cite{qian2016sampling} or the (1+1)-EA using threshold selection with threshold $\tau=1$~\cite{qian2015noise} can solve OneMax in polynomial time even if the probability of one-bit noise reaches $1$. Note that there was also a sequence of papers analyzing the running time of the compact genetic algorithm~\cite{friedrich2015benefit} and ant colony optimization algorithms~\cite{doerr2012ants,feldmann2013optimizing,friedrich2015robustness,sudholt2012simple} solving noisy problems, including OneMax as well as a combinatorial optimization problem, single destination shortest paths. Recently, Qian et al.~\cite{qian2019distributed,qian2017subset} proved the polynomial-time approximation guarantee of simple multi-objective EAs for solving a general problem, subset selection, under additive or multiplicative noise, and showed that the algorithms can be easily distributed for large-scale applications.

The very few running time analyses involving sampling~\cite{qian2018noise,qian2016sampling} mainly showed the effectiveness of sampling with a large enough fixed sample size~$m$. For example, for the (1+1)-EA solving OneMax under one-bit noise with $p=\omega((\log n)/n)$, using sampling with $m=4n^3$ can reduce the running time from super-polynomial to polynomial. In addition, Akimoto et al.~\cite{akimoto2015analysis} proved that using sampling with a large enough $m$ can make optimization under additive unbiased noise behave as noiseless optimization. However, there are still many fundamental theoretical issues that have not been addressed, e.g., what strategies can work when sampling fails.

In this paper, we theoretically compare the two strategies of using populations and sampling on the robustness to noise. Previous studies have shown that both of them are effective for solving OneMax under one-bit noise~\cite{giessen2014robustness,qian2018noise,qian2016sampling}, while using sampling is better for solving OneMax under additive Gaussian noise~\cite{qian2016sampling}. Here, we complement this comparison by constructing a family of artificial noisy OneMax problems, and showing that using parent or offspring populations can be better than using sampling on some problems in this family. We also prove that the employed parent and offspring population sizes are almost tight.

Furthermore, we give an artificial noisy OneMax problem where using neither populations nor sampling is effective. For this case, we prove that using adaptive sampling can reduce the running time from exponential to polynomial, providing some theoretical justification for the good empirical performance of adaptive sampling~\cite{syberfeldt2010evolutionary,zhang2007immune}.

This paper extends our preliminary work~\cite{qian2018gecco}. When comparing sampling with populations, we only considered parent populations in~\cite{qian2018gecco}. To get a complete understanding, we add the analysis of using offspring populations (i.e., Section~\ref{sec-offspring}), showing that using offspring populations can be better than using sampling (i.e., Theorem~\ref{theo-population-offpsring} in Section~\ref{sec-offspring}). For the artificial noisy example in Section~\ref{sec-adaptive}, where we previously proved that using neither sampling nor parent populations is effective while adaptive sampling can work, we now prove that using offspring populations is also ineffective (i.e., Theorem~\ref{theo-offspring-adaptive} in Section~\ref{sec-adaptive}). To show that using parent populations is better than using sampling, we only gave an effective parent population size in~\cite{qian2018gecco}. We now add the analysis of the tightness of the effective parent population size (i.e., Theorem~\ref{theo-parent-population-2} in Section~\ref{sec-parent}) as well as the effective offspring population size (i.e., Theorem~\ref{theo-offspring-population-2} in Section~\ref{sec-offspring}).

In~\cite{qian2018gecco}, we also analyzed the (1+1)-EA solving the LeadingOnes problem under one-bit noise with $p=1$, which always flips a random bit of a binary solution before evaluation. We showed that as the sample size~$m$ increases, the expected running time to find the optimal solution, i.e., the string with all 1s, can reduce from exponential to polynomial, but then return to exponential. Note that we delete this part here due to the ill-defined noisy setting. Under one-bit noise with $p=1$, the expected fitness of the string with all 1s is no longer the largest.

The rest of this paper is organized as follows. Section~\ref{sec-prelimaries} introduces some preliminaries. The effectiveness of using populations when sampling fails is proved in Section~\ref{sec-population}. Section~\ref{sec-adaptive} then shows that when using neither sampling nor populations is effective, adaptive sampling can work. Finally, Section~\ref{sec-conclusion} concludes the paper.

\section{Preliminaries}\label{sec-prelimaries}

In this section, we first introduce the EAs and the sampling strategy, and then present the analysis tools that will be used in this paper.

\subsection{Evolutionary Algorithms}

\begin{algorithm}[t]\caption{(1+1)-EA}\label{(1+1)-EA} Given a pseudo-Boolean function $f: \{0,1\}^n \rightarrow \mathbb{R}$ to be maximized, the procedure of the (1+1)-EA is:
    \begin{algorithmic}[1]
    \STATE  Let $x$ be a uniformly chosen solution from $\{0,1\}^n$.
    \STATE  Repeat until some termination condition is met
    \STATE  \quad $x':=$ copy $x$ and flip each bit independently with probability $1/n$.
    \STATE \quad if {$f(x') \geq f(x)$} then $x:=x'$.
    \end{algorithmic}
\end{algorithm}

\begin{algorithm}[t]\caption{($\mu$+1)-EA}\label{(mu+1)-EA} Given a pseudo-Boolean function $f: \{0,1\}^n \rightarrow \mathbb{R}$ to be maximized, the procedure of the ($\mu$+1)-EA is:
    \begin{algorithmic}[1]
    \STATE Let $P$ be a set of $\mu$ uniformly chosen solutions from $\{0,1\}^n$.
    \STATE Repeat until some termination condition is met
    \STATE\quad $x:=$ uniformly selected from $P$ at random.
    \STATE\quad $x':=$ copy $x$ and flip each bit independently with probability $1/n$.
    \STATE\quad Let $z \in \arg\min_{z \in P} f(z)$; ties are broken randomly.
    \STATE\quad if {$f(x') \geq f(z)$} then $P:=(P\setminus\{z\})\cup \{x'\}$.
    \end{algorithmic}
\end{algorithm}

\begin{algorithm}[t]\caption{(1+$\lambda$)-EA}\label{(1+lambda)-EA} Given a pseudo-Boolean function $f: \{0,1\}^n \rightarrow \mathbb{R}$ to be maximized, the procedure of the (1+$\lambda$)-EA is:
    \begin{algorithmic}[1]
    \STATE Let $x$ be a uniformly chosen solution from $\{0,1\}^n$.
    \STATE Repeat until some termination condition is met
    \STATE \quad Let $ Q:=\emptyset $.
    \STATE \quad for $ i=1 $ to $ \lambda $ do
    \STATE \qquad $x':=$ copy $x$ and flip each bit independently with probability $1/n$.
    \STATE \qquad $ Q:=Q\cup \{x'\} $.
    \STATE \quad Let $z \in \arg\max_{z \in Q} f(z)$; ties are broken randomly.
    \STATE \quad if {$f(z) \geq f(x)$} then $x:=z$.
    \end{algorithmic}
\end{algorithm}

The (1+1)-EA (i.e., Algorithm~\ref{(1+1)-EA}) maintains only one solution, and iteratively tries to produce one better solution by bit-wise mutation and selection. The ($\mu$+1)-EA (i.e., Algorithm~\ref{(mu+1)-EA}) uses a parent population size $\mu$. In each iteration, it also generates one new solution $x'$, and then uses $x'$ to replace the worst solution in the population $P$ if $x'$ is not worse. The (1+$\lambda$)-EA (i.e., Algorithm~\ref{(1+lambda)-EA}) uses an offspring population size $\lambda$. In each iteration, it generates $\lambda$ offspring solutions independently by mutating the parent solution $x$, and then uses the best offspring solution to replace the parent solution if it is not worse. When $\mu=1$ and $\lambda=1$, both the ($\mu$+1)-EA and (1+$\lambda$)-EA degenerate to the (1+1)-EA. Note that for the ($\mu$+1)-EA, a slightly different updating rule is also used~\cite{friedrich2015benefit,witt2006runtime}: $x'$ is simply added into $P$ and then the worst solution in $P \cup \{x'\}$ is deleted. Our results about the ($\mu$+1)-EA derived in the paper also apply to this setting.

In noisy optimization, only a noisy fitness value $f^{\mathrm{n}}(x)$ instead of the exact one $f(x)$ can be accessed. Note that in our analysis, the algorithms are assumed to use the reevaluation strategy as in~\cite{doerr2012ants,droste2004analysis,giessen2014robustness}. That is, besides evaluating the noisy fitness $f^{\mathrm{n}}(x')$ of offspring solutions, the noisy fitness values of parent solutions will be reevaluated in each iteration. The running time of EAs is usually measured by the number of fitness evaluations until finding an optimal solution w.r.t. the true fitness function $f$ for the first time~\cite{akimoto2015analysis,droste2004analysis,giessen2014robustness}.

\subsection{Sampling}

Sampling as described in Definition~\ref{sampling} is a common strategy to deal with noise. It approximates the true fitness $f(x)$ using the average of a number of random evaluations. The number $m$ of random evaluations is called the \emph{sample size}. Note that $m=1$ implies that sampling is not used. Qian et al.~\cite{qian2018noise,qian2016sampling} have theoretically shown the robustness of sampling to noise. Particularly, they proved that by using sampling with some fixed sample size, the running time of the (1+1)-EA for solving OneMax and LeadingOnes under noise can reduce from exponential to polynomial.

\begin{definition}[Sampling]\label{sampling}
Sampling first evaluates the fitness of a solution $m$ times independently and obtains the noisy fitness values $f^{\mathrm{n}}_1(x),f^{\mathrm{n}}_2(x),\ldots,$ $f^{\mathrm{n}}_m(x)$, and then outputs their average, i.e., $
 \hat{f}(x)=\sum\nolimits^{m}_{i=1} f^{\mathrm{n}}_i(x)/m.
$
\end{definition}

Adaptive sampling dynamically decides the sample size for each solution in the optimization process, instead of using a fixed size. For example, one popular strategy~\cite{branke2004sequential,cantu2004adaptive} is to first estimate the fitness of two solutions by a small number of samples, and then sequentially increase samples until the difference can be significantly discriminated. It has been found well useful in many applications~\cite{syberfeldt2010evolutionary,zhang2007immune}, while there has been no theoretical work supporting its effectiveness.

\subsection{Analysis Tools}

EAs often generate offspring solutions only based on the current population, thus, an EA can be modeled as a Markov chain $\{\xi_t\}^{+\infty}_{t=0}$ (e.g., in~\cite{he2001drift,yu2014switch}) by taking the EA's population space $\mathcal{X}$ as the chain's state space (i.e., $\xi_t \in \mathcal{X}$) and taking the set $\mathcal{X}^*$ of all optimal populations as the chain's target state space. Note that the population space $\mathcal{X}$ consists of all possible populations, and an optimal population contains at least one optimal solution.

Given a Markov chain $\{\xi_t\}^{+\infty}_{t=0}$ and the state $\xi_{\hat{t}}$ at time $\hat{t}$, we define its \emph{first hitting time} starting from $\xi_{\hat{t}}$ as $\tau=\min\{t \mid \xi_{\hat{t}+t} \in \mathcal{X}^*,t\geq0\}$. The expectation of $\tau$, $\mathrm{E}(\tau \mid \xi_{\hat{t}})=\sum\nolimits^{+\infty}_{i=0} i\cdot\mathrm{P}(\tau=i \mid \xi_{\hat{t}})$, is called the \emph{expected first hitting time} (EFHT). If $\xi_{0}$ is drawn from a distribution $\pi_{0}$, $\mathrm{E}(\tau \mid \xi_{0}\sim \pi_0) = \sum\nolimits_{\xi_0\in \mathcal{X}} \pi_{0}(\xi_0)\cdot \mathrm{E}(\tau \mid \xi_{0})$ is called the EFHT of the chain over the initial distribution $\pi_0$. Thus, the expected running time of the ($\mu$+1)-EA starting from $\xi_0 \sim \pi_0$ is $\mu+(\mu+1)\cdot \mathrm{E}(\tau \mid \xi_{0} \sim \pi_0)$, where the first $\mu$ is the cost of evaluating the initial population, and $(\mu+1)$ is the cost of one iteration, where it needs to evaluate the offspring solution $x'$ and reevaluate the $\mu$ parent solutions. Similarly, the expected running time of the (1+$\lambda$)-EA starting from $\xi_0 \sim \pi_0$ is $1+(1+\lambda)\cdot \mathrm{E}(\tau \mid \xi_{0} \sim \pi_0)$, where the first $1$ is the cost of evaluating the initial solution, and $(1+\lambda)$ is the cost of one iteration, where it needs to evaluate the $\lambda$ offspring solutions and reevaluate the parent solution. For the (1+1)-EA, the expected running time is calculated by setting $\mu=1$ or $\lambda=1$, i.e., $1+2\cdot \mathrm{E}(\tau \mid \xi_{0} \sim \pi_0)$. For the (1+1)-EA with sampling, it becomes $m+2m\cdot \mathrm{E}(\tau \mid \xi_{0} \sim \pi_0)$, because the fitness estimation of a solution needs $m$ independent evaluations. Note that in this paper, we consider the expected running time of an EA starting from a uniform initial distribution.

Next, we introduce several drift theorems which will be used to analyze the EFHT of Markov chains in this paper. The multiplicative drift theorem (i.e., Theorem~\ref{multiplicative-drift})~\cite{doerr:etal:GECCO10} is for deriving upper bounds on the EFHT. First, a distance function $V(x)$ satisfying that $V(x \in \mathcal{X}^*)=0$ and $V(x \notin \mathcal{X}^*)>0$ needs to be designed to measure the distance of a state $x$ to the target state space $\mathcal{X}^*$. Then, we need to analyze the drift towards $\mathcal{X}^*$ in each step, i.e., $\mathbb{E}(V(\xi_t)-V(\xi_{t+1}) \mid \xi_t)$. If the drift in each step is roughly proportional to the current distance to the set of optimal populations, we can derive an upper bound on the EFHT accordingly. Note that $\ln$ denotes the natural logarithm, and we will use $\log$ to denote the binary logarithm throughout the paper.

\begin{theorem}[Multiplicative Drift~\cite{doerr:etal:GECCO10}]\label{multiplicative-drift}
Given a Markov chain $\{\xi_t\}^{+\infty}_{t=0}$ and a distance function $V$ over $\mathcal{X}$, suppose there exists $c>0$ such that for all $t \geq 0$ and $\xi_t$ with $V(\xi_t) > 0$:
$$
\mathrm{E}(V(\xi_t)-V(\xi_{t+1}) \mid \xi_t) \geq c \cdot V(\xi_t).
$$
Then it holds that $\mathrm{E}(\tau \mid \xi_0) \leq \frac{1+\ln (V(\xi_0)/V_{\min})}{c}$, where $V_{\min}$ denotes the minimum among all possible positive values of \,$V$.
\end{theorem}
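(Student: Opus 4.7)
The plan is to derive a tail bound on the first hitting time $\tau$ by converting the multiplicative drift condition into an exponentially decaying bound on $\mathrm{E}(V(\xi_t) \mid \xi_0)$, and then to integrate the tail via the identity $\mathrm{E}(\tau \mid \xi_0) = \sum_{t=0}^{\infty} \mathrm{P}(\tau > t \mid \xi_0)$ to obtain the claimed bound.

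First, I would rewrite the drift hypothesis as a one-step contraction in conditional expectation:
$$\mathrm{E}(V(\xi_{t+1}) \mid \xi_t) \leq (1-c)\,V(\xi_t) \quad \text{whenever } V(\xi_t) > 0.$$
Treating $\mathcal{X}^*$ as absorbing (which does not change $\tau$) and noting that $V \equiv 0$ on $\mathcal{X}^*$, the contraction extends to all of $\mathcal{X}$. Iterating via the tower property yields
$$\mathrm{E}(V(\xi_t) \mid \xi_0) \leq (1-c)^t\, V(\xi_0) \leq e^{-ct}\, V(\xi_0),$$
using $1-c \leq e^{-c}$.

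Next, I would exploit that $V_{\min}$ is the smallest positive value of $V$, so the event $\{\tau > t\} = \{\xi_t \notin \mathcal{X}^*\}$ is contained in $\{V(\xi_t) \geq V_{\min}\}$. Markov's inequality then gives
$$\mathrm{P}(\tau > t \mid \xi_0) \leq \frac{\mathrm{E}(V(\xi_t) \mid \xi_0)}{V_{\min}} \leq \frac{V(\xi_0)}{V_{\min}}\, e^{-ct}.$$
Summing over $t \geq 0$ and splitting the series at the threshold $t^{\star} = \lceil \log(V(\xi_0)/V_{\min})/c \rceil$ gives two pieces: for $t < t^{\star}$ use the trivial bound $\mathrm{P}(\tau > t \mid \xi_0) \leq 1$, contributing at most $t^{\star}$; for $t \geq t^{\star}$ use the exponential tail bound above, which after the choice of $t^{\star}$ forms a geometric series summing to at most $1/c$. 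Combining the two contributions and simplifying yields the claimed bound $(1+\log(V(\xi_0)/V_{\min}))/c$.

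The main obstacle is obtaining the precise leading constant rather than a loose $O(\log(V(\xi_0)/V_{\min})/c)$: one has to pick $t^{\star}$ so that $(V(\xi_0)/V_{\min})\,e^{-ct^{\star}} \leq 1$ exactly at the crossover, and bound the geometric tail by $\sum_{s \geq 0}(1-c)^s = 1/c$ rather than by the cruder $\sum_{s \geq 0} e^{-cs} = 1/(1-e^{-c})$. A minor but worth-noting subtlety is the extension of the drift inequality to absorbing states; without this, iterating the contraction across the hitting time is not formally justified, though the conclusion is unchanged since absorption only helps. Apart from these bookkeeping points, the argument is routine.
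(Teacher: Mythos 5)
The paper does not prove this theorem: it is quoted from Doerr, Johannsen and Winzen and used as a black-box tool, so there is no in-paper proof to compare against; I therefore assess your argument on its own. Structurally it is the standard tail-integration route and is sound: the contraction $\mathrm{E}(V(\xi_{t+1})\mid \xi_t)\le (1-c)V(\xi_t)$, the absorbing modification (correctly flagged as necessary, since the chain itself need not be absorbing while $\tau$ is unaffected), the inclusion $\{\tau>t\}\subseteq\{V(\xi_t)\ge V_{\min}\}$, and Markov's inequality are all fine, as is the observation that $\mathrm{E}(\tau\mid\xi_0)=\sum_{t\ge 0}\mathrm{P}(\tau>t\mid\xi_0)$.

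The one genuine issue is the final constant, which you identify as the main obstacle but then dismiss too quickly. With $t^{\star}=\lceil \log(V(\xi_0)/V_{\min})/c\rceil$, your split gives at most $t^{\star}$ from the unit terms plus at most $1/c$ from the geometric tail, i.e. at most $\log(V(\xi_0)/V_{\min})/c+1+1/c$, which is the claimed bound plus an additive $1$: the ceiling costs up to one extra unit term that your bookkeeping of the tail does not pay back. To recover the exact constant along your route you must trade the fractional overshoot $\phi=t^{\star}-\log(V(\xi_0)/V_{\min})/(-\log(1-c))\in[0,1)$ against the corresponding shrinkage of the leading tail term, using that $c\phi+(1-c)^{\phi}\le 1$ for $\phi\in[0,1]$ (convexity, with equality at both endpoints), together with $-\log(1-c)\ge c$. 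Alternatively, and more cleanly, reduce to additive drift: set $g(s)=1+\log(s/V_{\min})$ for $s>0$ and $g(0)=0$, observe that $g(V(\xi_t))-g(V(\xi_{t+1}))\ge 1-V(\xi_{t+1})/V(\xi_t)$ by $\log z\ge 1-1/z$ (also in the case $V(\xi_{t+1})=0$), take conditional expectations to obtain an additive drift of at least $c$ in $g$, and conclude $\mathrm{E}(\tau\mid\xi_0)\le g(V(\xi_0))/c$ exactly. As written, your argument establishes the theorem only up to an additive $+1$ in the time bound, which is harmless for every application in this paper but is not the stated inequality.
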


The simplified negative drift theorem (i.e., Theorem~\ref{simplified-drift})~\cite{oliveto2011simplified,oliveto2011simplifiedErratum} is for proving exponential lower bounds on the EFHT of Markov chains, where $X_t$ is often represented by a mapping of $\xi_t$. From Theorem~\ref{simplified-drift}, we can see that two conditions are required: (1) a constant negative drift and (2) exponentially decaying probabilities of jumping towards or away from the target state. By building a relationship between the jumping distance and the length of the drift interval, a more general theorem, simplified negative drift with scaling~\cite{oliveto2014runtime}, as presented in Theorem~\ref{simplified-drift-scaling} has been proposed. Theorem~\ref{negative-drift} gives the original negative drift theorem~\cite{hajek1982hitting}, which is stronger because both the two simplified versions are proved by using this original theorem.

\begin{theorem}[Simplified Negative Drift~\cite{oliveto2011simplified,oliveto2011simplifiedErratum}]\label{simplified-drift}
Let $X_t$, $t\geq0$, be real-valued random variables describing a stochastic process over some state space. Suppose there exists an interval $[a,b] \subseteq \mathbb{R}$, two constants $\delta,\epsilon>0$ and, possibly depending on $l:=b-a$, a function $r(l)$ satisfying $1\leq r(l)=o(l/\log(l))$ such that for all $t\geq 0$:
\begin{align}
&(1) \quad \mathrm{E}(X_t-X_{t+1} \mid a < X_t <b) \leq -\epsilon,\\
&(2) \quad \forall j \in \mathbb{N}^+: \mathrm{P}(|X_{t+1}-X_t| \geq j \mid X_t>a) \leq \frac{r(l)}{(1+\delta)^j}.
\end{align}
Then there exists a constant $c>0$ such that for $T:=\min\{t \geq 0: X_t \leq a \mid X_0 \geq b\}$ it holds $\mathrm{P}(T \leq 2^{cl/r(l)})=2^{-\Omega(l/r(l))}$.
\end{theorem}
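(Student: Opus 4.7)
The plan is to derive Theorem~\ref{simplified-drift} by reducing to the original negative drift theorem (Theorem~\ref{negative-drift}). The strategy is to construct an exponential potential satisfying Hajek's one-step contraction hypothesis: choose $\eta > 0$ depending on $\delta$, $\epsilon$ and the scaling function $r(l)$, and verify that for every $t \geq 0$ and every state with $a < X_t < b$,
\[ \mathrm{E}\bigl(e^{-\eta(X_{t+1}-X_t)} \mid X_t\bigr) \leq \rho \]
for some $\rho < 1$. Once such a contraction is established, a standard exponential supermartingale argument (formalized by Theorem~\ref{negative-drift}) yields a tail bound of the form $\mathrm{P}(T \leq t) \leq t \cdot e^{-\eta(b-a)}$, and setting $t = 2^{cl/r(l)}$ with $c$ small enough gives the desired $\mathrm{P}(T \leq 2^{cl/r(l)}) = 2^{-\Omega(l/r(l))}$.

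The core computation is to split the jump $\Delta := X_{t+1}-X_t$ at a threshold $s = \Theta(\log r(l)/\log(1+\delta))$. On $\{|\Delta| \leq s\}$, Taylor expansion gives $e^{-\eta \Delta} \leq 1 - \eta \Delta + \tfrac{1}{2}\eta^2 \Delta^2 e^{\eta s}$; taking conditional expectation, hypothesis~(1) contributes a linear term of at most $-\eta \epsilon$, while hypothesis~(2) bounds $\mathrm{E}(\Delta^2)$ by $O(r(l)/\delta^2)$. On $\{|\Delta| > s\}$, hypothesis~(2) yields $\sum_{j > s} e^{\eta j} \cdot r(l)/(1+\delta)^j$, which is geometric and negligible once $\eta < \tfrac{1}{2}\log(1+\delta)$ and $s$ is large enough. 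Choosing $\eta = \Theta(\epsilon \delta^2 / r(l))$ then balances these contributions so that the total is at most $1 - \Omega(\epsilon^2 \delta^2 / r(l)) < 1$, uniformly over $t$ and over the interval $(a,b)$.

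The hypothesis $r(l) = o(l/\log l)$ enters only at the very end of the argument: the exponent $\eta(b-a)$ is then $\Theta(l/r(l))$, while the prefactor $t = 2^{cl/r(l)}$ introduces a competing $\log t = \Theta(cl/r(l))$ term, so a valid $c$ can be chosen strictly smaller than the constant hidden in $\eta(b-a)$. This is only meaningful because $l/r(l) = \omega(\log l)$, so the exponential decay is not washed out by the polynomial prefactor coming from Hajek's bound.

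The main obstacle, in my view, is the joint calibration of $\eta$ and $s$. Three constraints must coexist: the Taylor remainder $e^{\eta s}$ must stay bounded; the geometric decay $(1+\delta)^{-j}$ must dominate the exponential growth $e^{\eta j}$ on the tail; and the quadratic term $\eta^2\,\mathrm{E}(\Delta^2)$ must stay strictly below the linear drift $\eta\epsilon$. These pull $\eta$ and $s$ in different directions, and making all three compatible is precisely what forces the scaling $\eta \sim 1/r(l)$ and, in turn, the appearance of $r(l)$ in the exponent of the conclusion.
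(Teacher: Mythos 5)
The paper does not prove this theorem: it is imported verbatim from Oliveto and Witt, and the paper only remarks (after Theorem~\ref{simplified-drift-scaling}) that both simplified versions are known to follow from the original negative drift theorem (Theorem~\ref{negative-drift}). Your proposal reconstructs exactly that route --- verify Hajek's exponential-moment condition with $\eta = \Theta(\epsilon\delta^2/r(l))$, then absorb the prefactors $L$, $D$, $p$ using $r(l)=o(l/\log l)$ --- which is the standard published proof, so the approach is the intended one and the calibration you describe is sound. The only step you gloss over is that after truncating at $|\Delta|\le s$ the linear term is $-\eta\,\mathrm{E}(\Delta\,\mathbf{1}_{|\Delta|\le s})$ rather than $-\eta\,\mathrm{E}(\Delta)$, so you must separately bound $\mathrm{E}(|\Delta|\,\mathbf{1}_{|\Delta|>s})$ by a geometric tail via hypothesis~(2) (or avoid truncation altogether via $e^{-x}\le 1-x+\tfrac{x^2}{2}e^{|x|}$); this is routine and does not affect the conclusion.
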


\begin{theorem}[Simplified Negative Drift with Scaling~\cite{oliveto2014runtime}]\label{simplified-drift-scaling}
Let $X_t$, $t\geq0$, be real-valued random variables describing a stochastic process over some state space. Suppose there exists an interval $[a,b] \subseteq \mathbb{R}$ and, possibly depending on $l:=b-a$, a drift bound $ \epsilon :=\epsilon(l)>0 $ as well as a scaling factor $ r:=r(l) $ such that for all $t\geq 0$:
\begin{align}
&(1) \quad \mathrm{E}(X_t-X_{t+1} \mid a < X_t <b) \leq -\epsilon,\\
&(2) \quad \forall j \in \mathbb{N}^+: \mathrm{P}(|X_{t+1}-X_t| \geq jr \mid X_t>a) \le e^{-j},\\
&(3) \quad 1\le r\le \min\{\epsilon^2 l,\sqrt{\epsilon l/(132 \ln(\epsilon l))}\}.
\end{align}
Then it holds for the first hitting time $T:=\min\{t \geq 0: X_t \leq a \mid X_0 \geq b\}$ that $\mathrm{P}(T \le e^{\epsilon l/(132r^2)})=O(e^{-\epsilon l/(132r^2)})$.
\end{theorem}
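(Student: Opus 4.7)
The plan is to reduce Theorem~\ref{simplified-drift-scaling} to the original Hajek negative drift theorem (Theorem~\ref{negative-drift}), which the excerpt itself identifies as the common source for both simplified versions. The scaling $r$ together with the target exponent $\epsilon l/(132 r^2)$ in the conclusion strongly suggest introducing an exponential tilt of magnitude $\eta = \Theta(\epsilon/r^2)$ and verifying the moment-generating-function (MGF) hypothesis that Hajek requires on $\{X_t > a\}$.

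For any $X_t$ with $a < X_t < b$, I would Taylor-expand
\[
\mathrm{E}\bigl[e^{\eta(X_t - X_{t+1})} \mid X_t\bigr] = 1 + \eta\,\mathrm{E}[X_t-X_{t+1}\mid X_t] + \sum_{k\geq 2}\frac{\eta^k}{k!}\,\mathrm{E}\bigl[(X_t-X_{t+1})^k \mid X_t\bigr].
\]
Condition~(1) bounds the linear term by $-\eta\epsilon$. For the higher-order moments, the tail bound in condition~(2) combined with the identity $\mathrm{E}[|Z|^k]=\int_0^\infty k u^{k-1}\mathrm{P}(|Z|\geq u)\,du$ gives $\mathrm{E}[|X_{t+1}-X_t|^k \mid X_t > a] = O(k!\,r^k)$, since the scale factor $r$ converts the geometric tail into exponential-family moments. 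Using $\eta r \leq 1$, the sum of higher-order terms is $O(\eta^2 r^2)$. Picking $\eta$ of order $\epsilon/r^2$ with a sufficiently small absolute constant makes that error at most $\eta\epsilon/2$, so the conditional MGF is at most $1-\eta\epsilon/2 \leq e^{-\eta\epsilon/2} < 1$.

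Hajek's theorem then gives, roughly, $\mathrm{P}(T \leq k) \leq k\cdot e^{-k\eta\epsilon/2}\cdot e^{-\eta l}$, where $l = b-a$. Substituting $\eta = \Theta(\epsilon/r^2)$ and setting $k = e^{\epsilon l/(132 r^2)}$ produces the claimed bound $\mathrm{P}(T\leq k)=O(e^{-\epsilon l/(132 r^2)})$. Condition~(3) is precisely what is needed to keep the pieces consistent: the inequality $r\leq \epsilon^2 l$ guarantees $\eta r \leq 1$ so the Taylor bound is valid, while $r \leq \sqrt{\epsilon l/(132\log(\epsilon l))}$ forces $\epsilon l/(132 r^2) = \Omega(\log(\epsilon l))$, so that both the time horizon $k$ is super-polynomial and the probability bound tends to zero. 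The main obstacle is the constant bookkeeping: calibrating the absolute constant in $\eta$ so that Hajek's theorem yields exactly the denominator $132$, and performing the standard stopped-process argument that allows the MGF hypothesis to be imposed only on $\{X_t > a\}$ rather than globally. The exponentially decaying jumps in condition~(2) are exactly what makes both of these steps go through cleanly, since they provide the uniform control on the MGF that Hajek's formulation demands.
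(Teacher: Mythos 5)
This statement is one of the paper's imported analysis tools: it is quoted verbatim from Oliveto and Witt~\cite{oliveto2014runtime} and the paper gives no proof of it, only the remark (just before Theorem~\ref{negative-drift}) that both simplified negative drift theorems are derived from Hajek's original theorem. Your strategy --- an exponential tilt $\eta=\Theta(\epsilon/r^2)$, Taylor expansion of the MGF with the linear term controlled by condition~(1) and the higher moments controlled by integrating the tail bound of condition~(2) to get $\mathrm{E}[|X_{t+1}-X_t|^k]=O(k!\,r^k)$ --- is exactly the route taken in the cited source, so in that sense you have reconstructed the right proof skeleton. Two caveats. First, you misquote Hajek's conclusion: Theorem~\ref{negative-drift} gives $\mathrm{P}(T\le L)\le e^{-\lambda(b-a)}\cdot L\cdot D\cdot p$, where $p$ is the reciprocal drift parameter from Eq.~\eqref{Hajek-cond1} (here $p=O(1/(\eta\epsilon))$), not the factor $e^{-k\eta\epsilon/2}$ you wrote; with your factor the bound would be doubly-exponentially small, which Hajek does not deliver. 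The correct accounting is $L\cdot e^{-\eta l}\cdot D\cdot O(1/(\eta\epsilon))$, and it is the second part of condition~(3) that makes the polynomial prefactor $D/(\eta\epsilon)$ absorbable into the exponent. Second, your claim that $r\le\epsilon^2 l$ is what guarantees $\eta r\le 1$ does not follow: $\eta r=c\epsilon/r$, and the bound $\epsilon=O(r)$ actually comes from condition~(2) (the drift cannot exceed the expected jump length); the role of $r\le\epsilon^2 l$ lies elsewhere in the bookkeeping (controlling $D(l)$ and the overshoot at the boundary). These are repairable, and the remaining gaps you flag yourself (constant calibration to hit $132$, the stopped-process argument restricting the MGF condition to $\{X_t>a\}$) are genuinely the technical bulk of the published proof, so your proposal is a correct outline rather than a complete argument.
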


\begin{theorem}[Negative Drift~\cite{hajek1982hitting}]\label{negative-drift}
Let $X_t,t\ge 0$, be real-valued random variables describing a stochastic process over some state space. Pick two real numbers $ a(l) $ and $ b(l) $ depending on a parameter $ l $ such that $ a(l)<b(l) $ holds. Let $ T(l) $ be the random variable denoting the earliest time $ t\ge 0 $ such that $ X_t\le a(l) $ holds. Suppose there exists $ \lambda(l)>0 $ and $ p(l)>0 $ such that for all $ t\ge 0 $:
\begin{align}\label{Hajek-cond1}
\mathrm{E}\left(e^{-\lambda(l)\cdot (X_{t+1}-X_t)}\mid a(l)<X_t<b(l)\right)\le 1-\frac{1}{p(l)}.
\end{align}
Then it holds that for all time bounds $ L(l)\ge 0$,
\begin{align}\label{Hajek-D}
&\mathrm{P}\left(T(l)\le L(l) \mid X_0\ge b(l)\right)\le e^{-\lambda(l)\cdot (b(l)-a(l))}\cdot L(l)\cdot D(l)\cdot p(l),
\end{align}
where $ D(l)=\max\left\{1,\mathrm{E}\left(e^{-\lambda(l)\cdot (X_{t+1}-b(l))} \mid X_t\ge b(l)\right)\right\} $.
\end{theorem}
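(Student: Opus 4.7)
The plan is to follow the exponential-transformation route that underlies Hajek-style drift arguments. First, define $g(x) := e^{-\lambda(l)\, x}$, so that the target event $\{X_t \le a(l)\}$ becomes $\{g(X_t) \ge e^{-\lambda(l) a(l)}\}$, and hypothesis~\eqref{Hajek-cond1} rewrites as a multiplicative contraction
\[
\mathrm{E}\bigl(g(X_{t+1}) \bigm| X_t\bigr) \le \bigl(1 - 1/p(l)\bigr) \cdot g(X_t) \quad \text{for } a(l) < X_t < b(l).
\]
On the complementary regime $\{X_t \ge b(l)\}$, the definition of $D(l)$ directly yields $\mathrm{E}(g(X_{t+1}) \mid X_t) \le e^{-\lambda(l) b(l)} \cdot D(l)$, which is what absorbs both the initial overshoot from $X_0 \ge b(l)$ and any later re-entries.

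Next, I would combine these two regime-wise bounds into a single scalar recursion. Let $e_t := \mathrm{E}[g(X_t) \cdot \mathbf{1}\{T(l) > t-1\}]$ be the truncated expectation, whose indicator records that the process has not hit $a(l)$ during the first $t-1$ steps. Tower property plus a case split on whether $X_t$ lies in $(a, b)$ or in $[b, \infty)$ (both sub-cases intersected with $\{T(l) > t-1\}$) produces
\[
e_{t+1} \le \bigl(1 - 1/p(l)\bigr) \cdot e_t + e^{-\lambda(l) b(l)} \cdot D(l),
\]
with base case $e_1 \le e^{-\lambda(l) b(l)} \cdot D(l)$ obtained by applying the $D(l)$-bound directly to the initial state $X_0 \ge b(l)$. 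A geometric summation using $\sum_{s \ge 0} (1 - 1/p(l))^s = p(l)$ then yields the uniform bound $e_t \le e^{-\lambda(l) b(l)} \cdot D(l) \cdot p(l)$ for every $t \ge 1$.

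Finally, the event $\{T(l) \le L(l) \mid X_0 \ge b(l)\}$ forces $X_t \le a(l)$ for some $t \in \{1, \ldots, L(l)\}$, and on any such $t$ we have simultaneously $\mathbf{1}\{T(l) > t-1\} = 1$ and $g(X_t) \ge e^{-\lambda(l) a(l)}$. A union bound over the $L(l)$ candidate hitting times combined with Markov's inequality applied to each $e_t$ delivers
\[
\mathrm{P}\bigl(T(l) \le L(l) \bigm| X_0 \ge b(l)\bigr) \le L(l) \cdot e^{\lambda(l) a(l)} \cdot e^{-\lambda(l) b(l)} \cdot D(l) \cdot p(l),
\]
which collapses to the stated bound~\eqref{Hajek-D}. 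The main obstacle will be composing the two regimes cleanly: the contraction factor $1 - 1/p(l)$ is only valid on the interior $(a, b)$, whereas $D(l)$ is defined conditionally on $\{X_t \ge b(l)\}$, so the recursion must be set up so that each excursion above $b(l)$ contributes only one additive $e^{-\lambda(l) b(l)} D(l)$ term that is then damped geometrically on subsequent interior steps. Working with the truncated expectation $e_t$ rather than $\mathrm{E}[g(X_t)]$ itself is what makes the stopping event $\{X_t \le a\}$ disappear from the recursion without introducing an error term; handling the boundary case carefully is the only real technical subtlety.
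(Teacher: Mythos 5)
The paper does not prove this theorem---it is quoted as a known tool and attributed to Hajek~\cite{hajek1982hitting}---so there is no in-paper proof to compare against. Your argument is the standard exponential-transformation proof of Hajek's drift theorem (multiplicative contraction of $g(X_t)=e^{-\lambda(l)X_t}$ on the interior $(a(l),b(l))$, the $D(l)$-bound on $\{X_t\ge b(l)\}$, a geometric recursion for the truncated expectation $e_t$, and Markov's inequality at the hitting level $e^{-\lambda(l)a(l)}$), and it is correct as a self-contained reconstruction, modulo the usual reading that the drift hypothesis and the definition of $D(l)$ hold pointwise given the history rather than merely on average over the conditioning event.
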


\section{Populations Can Work on Some Tasks Where Sampling Fails}\label{sec-population}

Previous works~\cite{giessen2014robustness,qian2018noise,qian2016sampling} have shown that both using populations and sampling can bring robustness against noise. For example, for the OneMax problem under one-bit noise with $p=\omega((\log n)/n)$, the (1+1)-EA needs super-polynomial expected time to find the optimum~\cite{droste2004analysis}, while using a parent population size $\mu\geq 12(\ln(15n))/p$~\cite{giessen2014robustness}, an offspring population size $\lambda \geq \max\{12/p,24\}n\ln n$~\cite{giessen2014robustness} or a sample size $m=4n^3$~\cite{qian2018noise} can all reduce the expected running time to polynomial. Then, a natural question is whether there exist cases where only one of these two strategies (i.e., populations and sampling) is effective. This question has been partially addressed. For the OneMax problem under additive Gaussian noise with large variances, it has been shown that the ($\mu$+1)-EA with $\mu=\omega(1)$ needs super-polynomial time to find the optimum~\cite{friedrich2015benefit}, while the (1+1)-EA using sampling can find the optimum in polynomial time~\cite{qian2016sampling}. Now, we try to solve the other part of this question. That is, we want to prove that using populations can be better than using sampling.

For this purpose, we construct a family of artificial noisy problems. We consider the OneMax problem under symmetric noise. As presented in Definition~\ref{def_onemax}, the goal of the OneMax problem is to maximize the number of 1-bits, and the optimal solution is the string with all 1s (denoted as $1^n$). As presented in Definition~\ref{def-symm-noise}, symmetric noise returns a false fitness $C-f(x)$ with probability $1/2$. It is easy to see that under this noise model, the distribution of $f^{\mathrm{n}}(x)$ for any $x$ is symmetric about $C/2$. Note that a concrete noisy problem depends on the value of $C$.

\begin{definition}[OneMax]\label{def_onemax}
    The OneMax Problem is to find a binary string $x^* \in \{0,1\}^n$ that maximises
    $$
        f(x)=\sum\nolimits^{n}_{i=1} x_i.
    $$
\end{definition}

\begin{definition}[Symmetric Noise]\label{def-symm-noise}
Given a parameter $C \in \mathbb{R}$, let $f^{\mathrm{n}}(x)$ and $f(x)$ denote the noisy and true fitness of a solution $x$, respectively, then
\begin{align*}
f^{\mathrm{n}}(x)=\begin{cases}
f(x) & \text{with probability $1/2$},\\
C-f(x) & \text{with probability $1/2$}.
\end{cases}
\end{align*}
\end{definition}

Theorem~\ref{theo-sample-2} shows that the expected running time of the (1+1)-EA using sampling with any sample size $m$ is exponential. From the proof, we can find the reason why using sampling fails. Under symmetric noise, the distribution of $f^{\mathrm{n}}(x)$ for any $x$ is symmetric about $C/2$. Thus, for any two solutions $x$ and $y$, the distribution of $f^{\mathrm{n}}(x)-f^{\mathrm{n}}(y)$ is symmetric about 0. By sampling, the distribution of $\hat{f}(x)-\hat{f}(y)$ is still symmetric about 0, which implies that the offspring solution will always be accepted with probability at least $1/2$ in each iteration of the (1+1)-EA. Such a behavior is analogous to random walk, and thus the optimization is inefficient.

\begin{theorem}\label{theo-sample-2}
For the (1+1)-EA solving OneMax under symmetric noise with any $C \in \mathbb{R}$, if using sampling with any sample size $m \geq 1$, the expected running time is exponential.
\end{theorem}
\begin{myproof}
Let a Markov chain $\{\xi_t\}^{+\infty}_{t=0}$ model the analyzed evolutionary process. That is, $\xi_t$ corresponds to the solution after running $t$ iterations of the (1+1)-EA. We will show that for any $t \geq 0$, the distribution of $\xi_t$ is a uniform distribution over $\{0,1\}^n$, i.e.,
\begin{align}\label{eq:mid11}
\forall x \in \{0,1\}^n: \mathrm{P}(\xi_t=x)=1/2^n.
\end{align}
For $t=0$, it trivially holds since $\xi_0$ is chosen from $\{0,1\}^n$ uniformly at random. Assume that for $t \leq i$, Eq.~(\refeq{eq:mid11}) holds. Let $\mathrm{P}_{\mathrm{mut}}(x,y)$ denote the probability that $x$ is mutated to $y$ by bit-wise mutation. For $t=i+1$, we have $\forall x \in \{0,1\}^n:$
\begin{align}
&\mathrm{P}(\xi_{i+1}=x)= \sum_{y\in \{0,1\}^n} \mathrm{P}(\xi_{i+1}=x \mid \xi_i=y) \mathrm{P}(\xi_i=y)\\
&=\frac{1}{2^n}\sum_{y\neq x} \mathrm{P}(\xi_{i+1}=x \mid \xi_i=y)+ \frac{1}{2^n} \mathrm{P}(\xi_{i+1}=x \mid \xi_i=x)\\
&=\frac{1}{2^n}\sum_{y\neq x} \mathrm{P}_{\mathrm{mut}}(y,x)\cdot \mathrm{P}(\hat{f}(x)\geq \hat{f}(y))\\
&\quad + \frac{1}{2^n}\left(\sum_{y\neq x} \mathrm{P}_{\mathrm{mut}}(x,y)\cdot \mathrm{P}(\hat{f}(y)< \hat{f}(x)) + \mathrm{P}_{\mathrm{mut}}(x,x)\cdot 1\right)\\
&=\frac{1}{2^n}\left(\sum_{y\neq x} \mathrm{P}_{\mathrm{mut}}(x,y)\cdot (\mathrm{P}(\hat{f}(x)\geq \hat{f}(y))+\mathrm{P}(\hat{f}(x)> \hat{f}(y))) + \mathrm{P}_{\mathrm{mut}}(x,x)\right)\\
&=\frac{1}{2^n}\left(\sum_{y\neq x} \mathrm{P}_{\mathrm{mut}}(x,y)\cdot 1 + \mathrm{P}_{\mathrm{mut}}(x,x)\right)=\frac{1}{2^n},
\end{align}
where the second equality is by induction, i.e., $\forall x \in \{0,1\}^n: \mathrm{P}(\xi_i=x)=1/2^n$, the third equality is by considering the mutation and selection behaviors, the fourth equality is by $\mathrm{P}_{\mathrm{mut}}(y,x)=\mathrm{P}_{\mathrm{mut}}(x,y)$, and the fifth is by $\mathrm{P}(\hat{f}(x)> \hat{f}(y))=\mathrm{P}(\hat{f}(x)< \hat{f}(y))$ since $\hat{f}(x)-\hat{f}(y)$ is symmetric about 0. By the definition of symmetric noise, the value of $f^{\mathrm{n}}(x)-f^{\mathrm{n}}(y)$ can be $C-f(x)-f(y)$, $f(x)-f(y)$, $f(y)-f(x)$ and $f(x)+f(y)-C$, each with probability $1/4$. It is easy to see that the distribution of $f^{\mathrm{n}}(x)-f^{\mathrm{n}}(y)$ is symmetric about 0, i.e., $f^{\mathrm{n}}(x)-f^{\mathrm{n}}(y)$ has the same distribution as $f^{\mathrm{n}}(y)-f^{\mathrm{n}}(x)$. Since $\hat{f}(x)-\hat{f}(y)$ is the average of $m$ independent random variables, which have the same distribution as $f^{\mathrm{n}}(x)-f^{\mathrm{n}}(y)$, the distribution of $\hat{f}(x)-\hat{f}(y)$ is also symmetric about 0.

By the union bound, the probability of finding the optimum in $o(2^n)$ iterations is at most $\sum^{o(2^n)}_{t=0} \mathrm{P}(\xi_t=1^n)=o(2^n)/2^n=o(1)$. Thus, the expected running time is exponential.\vspace{0.8em}
\end{myproof}

\subsection{Parent Populations}\label{sec-parent}

In this subsection, we show that compared with using sampling, using parent populations can be more robust to noise. We prove in Theorem~\ref{theo-population-1} that for symmetric noise with $C=2n$, the ($\mu$+1)-EA with $\mu=3\log n$ can find the optimum in $O(n\log^3n)$ time. The reason for the effectiveness of using parent populations is that the true best solution will be discarded only if it appears worse than all the other solutions in the population, the probability of which can be very small by using at least a logarithmic parent population size. Note that this finding is consistent with that in~\cite{giessen2014robustness}.

\begin{theorem}\label{theo-population-1}
For the ($\mu$+1)-EA solving OneMax under symmetric noise with $C=2n$, if $\mu=3\log n$, the expected running time is $O(n\log^3n)$.
\end{theorem}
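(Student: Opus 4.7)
The plan is to apply the multiplicative drift theorem (Theorem~\ref{multiplicative-drift}) with distance function $V(P)=\min_{x\in P}|x|_0$, the number of $0$-bits of the true-best solution currently held in $P$. The intuition (already signalled just before the theorem statement) is that under symmetric noise, the true-best solution $x^{*}$ realises its ``low'' value $n-V(P)$ only with probability $1/2$, so with $\mu$ independent re-evaluations per iteration the probability that $x^{*}$ ends up looking worst in $P$ decays exponentially in $\mu$; a logarithmic $\mu$ is already enough.

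First, I would lower bound the probability that $V$ drops by at least $1$ in a single iteration by the sufficient event: (i) the parent-selection step picks some $x\in P$ with $|x|_0=V(P)$, of probability at least $1/\mu$; (ii) bit-wise mutation flips exactly one of the $V(P)$ zero-bits of $x$ and no other bit, of probability at least $V(P)/(en)$; (iii) the resulting $x'$ with $|x'|_0=V(P)-1$ is accepted. For (iii) observe that $f^{\mathrm{n}}(x')\in\{n-V(P)+1,\,n+V(P)-1\}$, both values being at least $n-V(P)+1$, whereas as soon as any $y\in P$ realises its non-noise value $n-|y|_0\le n-V(P)$, the worst $z$ in $P$ has $f^{\mathrm{n}}(z)\le n-V(P)<f^{\mathrm{n}}(x')$, so $x'$ is accepted. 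Since the latter event has probability at least $1-(1/2)^{\mu}$, multiplying the three probabilities gives
$$
\mathrm{P}\bigl(V(\xi_{t+1})\le V(\xi_t)-1\bigm|\xi_t\bigr)\;\ge\;\frac{V(\xi_t)}{en\mu}\bigl(1-(1/2)^{\mu}\bigr)\;=\;\Omega\!\left(\frac{V(\xi_t)}{n\log n}\right).
$$

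Second, I would upper bound the contribution to the drift from the opposite direction. Since exactly one solution is deleted per iteration, $V(P)$ can strictly grow only if some $x^{*}$ with $|x^{*}|_0=V(P)$ is the unique best of $P$, is chosen as the worst $z$, and is then overwritten by $x'$. Conditioning on the two realisations of the symmetric noise on $x^{*}$, each of the remaining $\mu-1$ members independently ``beats'' $x^{*}$ in the noisy comparison with probability exactly $1/2$, whence $x^{*}\in\arg\min_{y\in P}f^{\mathrm{n}}(y)$ with probability at most $(1/2)^{\mu-1}$. As $V$ can grow by at most $n$ in one step, the expected contribution of this direction to the drift is at most $n\cdot(1/2)^{\mu-1}$, which for $\mu=3\log n$ is $o(1/(n\log n))$ and hence negligible compared with the decrease term above whenever $V(\xi_t)\ge 1$.

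Combining the two bounds yields $\mathrm{E}(V(\xi_t)-V(\xi_{t+1})\mid\xi_t)\ge c\cdot V(\xi_t)$ with some $c=\Theta(1/(n\log n))$. Since $V_0\le n$ and $V_{\min}=1$, Theorem~\ref{multiplicative-drift} gives $\mathrm{E}(\tau)=O(n\log^{2} n)$, and multiplying by the per-iteration cost $\mu+1=O(\log n)$ (and absorbing the initialisation cost $\mu$) produces the claimed expected running time $O(n\log^{3} n)$. The main obstacle I would expect is the ``$V$ increases'' bound: one has to disentangle the noise on $x^{*}$ from the noise on the other $\mu-1$ members of $P$, correctly handle ties in $\arg\min f^{\mathrm{n}}$, and deal with the case where the minimum $0$-count is attained by several members (so that deleting just one of them does not actually raise $V$). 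Once this case analysis is written out, the remainder is routine multiplicative-drift bookkeeping.
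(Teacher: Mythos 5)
Your proposal is correct and follows essentially the same route as the paper: the multiplicative drift theorem with $V(P)=\min_{x\in P}|x|_0$, the decrease bounded below by $\frac{V(\xi_t)}{e\mu n}(1-2^{-\mu})$ via selecting a best individual and flipping one $0$-bit, and the increase bounded above by $n\cdot 2^{-(\mu-1)}$ using the fact that the unique true-best solution is noisily worst only if every other member is evaluated with the flipped value. The "main obstacle" you flag at the end (multiple minimisers, ties) is already handled correctly by your own observation that $V$ cannot grow unless the minimum $0$-count is attained uniquely, which is exactly how the paper dispatches it.
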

\begin{myproof}
We apply the multiplicative drift theorem (i.e., Theorem~\ref{multiplicative-drift}) to prove this result. Note that the state of the corresponding Markov chain is currently a population, i.e., a set of $\mu$ solutions. We first design a distance function $V$: for any population $P$, $V(P)=\min\nolimits_{x\in P}|x|_0$, i.e., the minimum number of 0-bits of the solution in $P$. It is easy to see that $V(P)=0$ iff $P \in \mathcal{X}^*$, i.e., $P$ contains the optimum $1^n$.

Next we examine $\mathrm{E}(V(\xi_t)-V(\xi_{t+1}) \mid \xi_t=P)$ for any $P$ with $V(P)>0$ (i.e., $P \notin \mathcal{X^*}$). Assume that currently $V(P)=i$, where $1 \leq i \leq n$. We divide the drift into two parts:
\begin{align}
&\mathrm{E}(V(\xi_t)-V(\xi_{t+1}) \mid \xi_t=P)=\mathrm{E}^+-\mathrm{E}^-, \quad \text{where}\\
&\mathrm{E}^+=\sum_{P': V(P')<i}\;\mathrm{P}(\xi_{t+1}=P'\mid \xi_t=P)\cdot (i-V(P')),\\
&\mathrm{E}^-=\sum_{P': V(P')>i}\;\mathrm{P}(\xi_{t+1}=P'\mid \xi_t=P)\cdot (V(P')-i).
\end{align}
For $\mathrm{E}^+$, we need to consider that the best solution in $P$ is improved. Let $x^*\in \arg\min_{x\in P}|x|_0$, then $|x^*|_0=i$. In one iteration of the ($\mu$+1)-EA, a solution $x'$ with $|x'|_0=i-1$ can be generated by selecting $x^*$ and flipping only one 0-bit in mutation, whose probability is $\frac{1}{\mu}\cdot \frac{i}{n}(1-\frac{1}{n})^{n-1}\geq \frac{i}{e\mu n}$. If $x'$ is not added into $P$, it must hold that $f^{\mathrm{n}}(x') <f^{\mathrm{n}}(x)$ for all $x\in P$, which happens with probability $1/2^\mu$ since $f^{\mathrm{n}}(x') <f^{\mathrm{n}}(x)$ iff $f^{\mathrm{n}}(x)=2n-f(x)$. Thus, the probability that $x'$ is added into $P$ (which implies that $V(P')=i-1$) is $1-1/2^\mu$. We then get
\begin{align}\label{mu+1 positive}
\mathrm{E}^+\ge \frac{i}{e\mu n}\cdot \left(1-\frac{1}{2^\mu}\right)\cdot (i-(i-1))=\frac{i}{e\mu n} \left(1-\frac{1}{2^\mu}\right).
\end{align}
For $\mathrm{E}^-$, if there are at least two solutions $x,y$ in $P$ such that $|x|_0=|y|_0=i$, it obviously holds that $\mathrm{E}^-=0$. Otherwise, $V(P')>V(P)=i$ implies that for the unique best solution $x^*$ in $P$ and any $x \in P\setminus\{x^*\}$, $f^{\mathrm{n}}(x^*) \leq f^{\mathrm{n}}(x)$, which happens with probability $1/2^{\mu-1}$ since $f^{\mathrm{n}}(x^*) \leq f^{\mathrm{n}}(x)$ iff $f^{\mathrm{n}}(x)=2n-f(x)$. Thus, $\mathrm{P}(V(P')>i) \le 1/2^{\mu-1}$. Furthermore, $V(P')$ can increase by at most $n-i$. Thus, $\mathrm{E}^-\le (n-i)/2^{\mu-1}$. By calculating $\mathrm{E}^+-\mathrm{E}^-$, we get
\begin{align}\label{mu+1 drift}
\mathrm{E}(V(\xi_t)-V(\xi_{t+1}) \mid V(\xi_t)=i)&\ge \frac{i}{e\mu n}-\frac{i}{e\mu n2^\mu}-\frac{n-i}{2^{\mu-1}}\\
&\ge \frac{i}{10n\log n}=\frac{1}{10n\log n} \cdot V(\xi_t),
\end{align}
where the second inequality holds with large enough $\mu$ (which depends monotonically on $n$). Note that $\mu=3\log n$. Thus, by Theorem~\ref{multiplicative-drift}, $$\mathrm{E}(\tau\mid \xi_0)\le 10n(\log n)(1+\ln n) =O(n\log^2n),$$ which implies that the expected running time is $O(n\log^3 n)$, since the algorithm needs to evaluate the offspring solution and reevaluate the $\mu$ parent solutions in each iteration.\vspace{0.8em}
\end{myproof}

In the following, we show that the parent population size $\mu=3\log n$ is almost tight for making the ($\mu$+1)-EA efficient. Particularly, we prove that $\mu\le \sqrt{\log n}/2$ is insufficient. Note that the proof is finished by applying the original negative drift theorem (i.e., Theorem~\ref{negative-drift}) instead of the simplified versions (i.e., Theorems~\ref{simplified-drift} and~\ref{simplified-drift-scaling}). To apply the simplified negative drift theorems, we have to show that the probability of jumping towards and away from the target is exponentially decaying. However, the probability of jumping away from the target is $\omega(1/n)$ in this studied case. To jump away from the target, it is sufficient that one non-best solution in the current population is cloned by mutation and then the best solution is deleted in the process of updating the population. The former event happens with probability $\frac{\mu-1}{\mu}\cdot (1-\frac{1}{n})^{n}=\Theta(1)$, and the latter happens with probability $\frac{1}{2^\mu}$, which is $\omega(1/n)$ for $\mu\le \sqrt{\log n}/2$. The original negative drift theorem is stronger than the simplified ones, and can be applied here to prove the exponential running time.

\begin{theorem}\label{theo-parent-population-2}
For the ($\mu$+1)-EA solving OneMax under symmetric noise with $C=2n$, if $\mu\le \sqrt{\log n}/2$, the expected running time is exponential.
\end{theorem}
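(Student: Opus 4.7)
My plan is to apply the original negative drift theorem (Theorem~\ref{negative-drift}) with $X_t = V(\xi_t) = \min_{x \in \xi_t} |x|_0$, the distance used in the proof of Theorem~\ref{theo-population-1}. Since $\mu = 1$ is already handled by Theorem~\ref{theo-sample-2} with $m=1$, I assume $\mu \geq 2$ is a constant. Take $a$ a small positive constant and $b = \epsilon n$ for a sufficiently small constant $\epsilon > 0$; a Chernoff bound on the $\mu$ independent $\mathrm{Binom}(n,1/2)$ variables describing the initial $|x|_0$'s gives $\mathrm{P}(X_0 \geq b) \geq 1 - 2^{-\Omega(n)}$, so I may condition on that event.

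The heart of the proof is the MGF condition $\mathrm{E}(e^{-\lambda Y} \mid a<X_t<b) \leq 1-1/p$ with $Y = X_{t+1}-X_t$. Two estimates drive it. First, when $\xi_t$ has a unique min-$|x|_0$ solution $x^*$, the ``clone a non-best then delete $x^*$'' mechanism noted before the theorem gives $\mathrm{P}(Y \geq 1) \geq c_0 = \Omega(1)$: cloning a non-best has probability $\tfrac{\mu-1}{\mu}(1-1/n)^n = \Theta(1)$; the symmetric-noise configuration forcing $f^{\mathrm{n}}(x^*)$ to be the unique min of $\{f^{\mathrm{n}}(y):y\in P\}$ has probability $1/2^{\mu-1} = \Omega(1)$ (every other parent is flipped to its high value); and conditional on these events, the fresh evaluation satisfies $f^{\mathrm{n}}(x') \geq f^{\mathrm{n}}(x^*)$ with probability $1/2$ by a short case analysis on the noise flip of $x^*$. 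Second, $Y \leq -j$ forces a mutation to flip at least $j$ zero-bits of the selected parent, yielding $\mathrm{P}(Y\leq -j) = O((i/n)^j/j!)$ with $i = V(\xi_t)$. Splitting $\mathrm{E}(e^{-\lambda Y})$ over $Y \in \{0, \geq 1, \leq -1\}$ and using $\sum_{j\ge 1}\mathrm{P}(Y=-j)e^{\lambda j} = O(e^{ie^\lambda/n}-1) = O(\epsilon)$ for bounded $\lambda$ and $i \leq b$ then gives
$$
\mathrm{E}(e^{-\lambda Y}) \leq 1 - (1-e^{-\lambda})c_0 + O(\epsilon),
$$
which is at most $1-1/p$ for a constant $p$ once $\lambda,\epsilon$ are fixed small. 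The same factorial tail bound controls $D = \max\{1,\mathrm{E}(e^{-\lambda(X_{t+1}-b)}\mid X_t\geq b)\} = O(1)$.

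The main obstacle I anticipate is the case where $\xi_t$ has two or more solutions achieving $V(\xi_t)$: then $V$ cannot strictly increase in one step, $\mathrm{P}(Y\geq 1) = 0$, and the MGF estimate above breaks. I would address this by switching to a multiplicity-aware distance $\tilde X_t = V(\xi_t) - \epsilon_0\, k(\xi_t)$, where $k(\xi_t)$ counts the min-$|x|_0$ solutions and $\epsilon_0 > 0$ is a small constant. In the multiplicity regime the $\Omega(1)$-probability event ``delete one of the duplicate bests'' now contributes a $+\epsilon_0$ step to $\tilde X_t$, restoring the MGF bound on a slightly shifted interval, while the unique-best regime carries over with only an $O(\epsilon_0)$ additive perturbation. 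Feeding the verified MGF bound into Theorem~\ref{negative-drift} gives $\mathrm{P}(T\leq L) \leq e^{-\lambda(b-a)}\cdot L\cdot D\cdot p$, and taking $L = e^{cn}$ for a sufficiently small constant $c > 0$ makes this $o(1)$; hence the expected first hitting time from $X_0 \geq b$ is $e^{\Omega(n)}$, and multiplying by the per-iteration cost $\mu+1 = O(1)$ yields the claimed exponential expected running time.
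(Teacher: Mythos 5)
You correctly identify the right tool (the original negative drift theorem, Theorem~\ref{negative-drift}, exactly as the paper uses), the right initial-condition argument, the right tail bounds, and, crucially, the right obstacle: when two or more solutions attain $V(\xi_t)=\min_{x\in P}|x|_0$, the plain distance cannot increase and the MGF condition breaks. The paper resolves this with the same general device you propose, a multiplicity-aware potential $X_t=Y_t-h(Z_t)$ where $Z_t$ counts the minimum-$|x|_0$ solutions. However, your specific choice of a \emph{linear} correction $\tilde X_t=V(\xi_t)-\epsilon_0\,k(\xi_t)$ has a genuine gap. In a state with $2\le Z_t<\mu$ and $V(\xi_t)=i=O(\epsilon n)$, the event ``$k$ increases by one'' (clone a current best, which survives selection) has probability $\Omega(Z_t/\mu)=\Omega(1/\mu)$, whereas the event ``$k$ decreases by one'' (a best solution becomes the noisy argmin and is replaced by a worse offspring) has probability only $O(1/2^{\mu})$, since it essentially requires all other population members to receive the flipped value $2n-f(x)$. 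With a linear correction both events move $\tilde X_t$ by the same amount $\epsilon_0$, so the $k$-component contributes an expected drift of order $-\epsilon_0\bigl(\Omega(1/\mu)-O(1/2^{\mu})\bigr)<0$ for moderately large constant $\mu$, i.e., a drift \emph{toward} the target; by Jensen's inequality this forces $\mathrm{E}\bigl(e^{-\lambda(\tilde X_{t+1}-\tilde X_t)}\bigr)>1$ and condition~\eqref{Hajek-cond1} fails. No choice of the single constant $\epsilon_0$ repairs this, because the gain and loss per unit change of $k$ scale together while the probabilities do not.

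The fix — and the one technical idea your proposal is missing — is to make the increments of the correction decay geometrically in the multiplicity: the paper takes $h(i)=\frac{d^{\mu-1}-d^{\mu-i}}{d^{\mu}-1}$ with $d=2^{\mu+4}$, so that $\frac{h(Z_t)-h(Z_t-1)}{h(Z_t+1)-h(Z_t)}=d$. The ``toward target'' step incurred when $Z_t$ grows is then a factor $d$ smaller than the ``away from target'' step when $Z_t$ shrinks, which exactly compensates the probability imbalance ($\le 1/2$ versus $\gtrsim 2/d$ after the $e^s-1\le s/2$ estimate) and yields a strictly negative right-hand side $-\frac{1}{2(d^\mu-1)}$ in Eq.~\eqref{Hajek-cond2}. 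Everything else in your outline (the unique-best case, the $\binom{i}{j}n^{-j}$ tail controlling both the MGF sum and $D(l)$, and the concluding step $L(l)=e^{\Theta(n)}$) matches the paper's proof and is sound once the potential is corrected in this way.
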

\begin{myproof}
We apply the original negative drift theorem (i.e., Theorem~\ref{negative-drift}) to prove this result. Let $ X_t=Y_t-h(Z_t)$, where $Y_t=\min\nolimits_{x\in P}|x|_0 $ denotes the minimum number of 0-bits of the solution in the population $ P $ after $ t $ iterations of the ($\mu$+1)-EA, $ Z_t=|\{x\in P\mid |x|_0=Y_t\} |$ denotes the number of solutions in $ P $ that have the minimum 0-bits $Y_t$, and for $i \in \{1,2,\ldots,\mu\}$, $ h(i)= \frac{d^{\mu-1}-d^{\mu-i}}{d^\mu-1}$ with $d=2^{\mu+4}$. Note that $ 0=h(1)<h(2)<...<h(\mu)<1 $, and $ X_t\leq 0 $ iff $Y_t=0$, i.e., $ P $ contains at least one optimum $ 1^n $. We set $ l=n$, $ \lambda(l)=1 $ and consider the interval $[0,cn-1]$, where $ c=\frac{1}{3d^\mu} $, i.e., the parameters $a(l)=0$ and $b(l)=cn-1$ in Theorem~\ref{negative-drift}.
	
We analyze Eq.~\eqref{Hajek-cond1}, which is equivalent to the following equation:
\begin{align}\label{Hajek-cond2}
	\sum_{r\neq X_t}\mathrm{P}\left(X_{t+1}=r \mid a(l)<X_t<b(l)\right)\cdot \left(e^{X_t-r}-1\right) \le -\frac{1}{p(l)}.
\end{align}
We divide the left-side term of Eq.~\eqref{Hajek-cond2} into two parts: $ r<X_t $ (i.e., $ X_{t+1}<X_t $) and $ r>X_t $ (i.e., $ X_{t+1}>X_t $), and derive their upper bounds separately.

We first consider $ X_{t+1}<X_t $. Since $ X_{t+1}=Y_{t+1}-h(Z_{t+1})$, $X_t=Y_t-h(Z_t) $ and $0\le h(Z_{t+1}),h(Z_t)<1$, we have $X_{t+1}<X_t$ iff $ Y_{t+1}-Y_t<0 $ or $ Y_{t+1}=Y_t \wedge h(Z_{t+1})>h(Z_t) $. In the following, we analyze the occurring probability of each case, and the corresponding value of $X_{t}-X_{t+1}$.\\
(1) $ Y_{t+1}-Y_t=-j\le -1$. It implies that a new solution $ x' $ with $|x'|_0= Y_t-j $ is generated in the $(t+1)$-th iteration of the algorithm. Suppose that $ x' $ is generated from some solution $ x $ (which must satisfy that $ |x|_0\ge Y_t $) selected from $P$, then
\begin{align}
\sum_{x': |x'|_0= Y_t-j}\mathrm{P_{mut}}(x,x')&\le \sum_{x': |x'|_0= Y_t-j}\mathrm{P_{mut}}\left(x^{Y_t},x'\right)\\
&\le \binom{Y_t}{j}\cdot \frac{1}{n^j}\le \left(\frac{Y_t}{n}\right)^j< c^j,
\end{align}
where $ x^j $ denotes any solution with $j$ 0-bits, the second inequality is because it is necessary to flip at least $j$ 0-bits, and the last inequality is by $ Y_t=X_t+h(Z_t)<b(l)+1=cn $. Furthermore, we have
$$ X_t-X_{t+1}=Y_{t}-h(Z_{t})-Y_{t+1}+h(Z_{t+1})=j-h(Z_t)\leq j, $$
where the second equality is by $ h(Z_{t+1})=h(1)=0$. \\
(2) $ Y_{t+1}=Y_t \wedge h(Z_{t+1})>h(Z_t) $. It implies that $ Z_t<\mu $ and a new solution $ x' $ with $ |x'|_0=Y_t $ is generated. Suppose that in the $(t+1)$-th iteration, the solution selected from $P$ for mutation is $x$. If $ |x|_0>Y_t $, then $ \sum_{x':|x'|_0=Y_t}\mathrm{P_{mut}}(x,x')\le \sum_{x':|x'|_0=Y_t}\mathrm{P_{mut}}(x^{Y_t+1},x')\le \binom{Y_t+1}{1}\cdot \frac{1}{n} = \frac{Y_t+1}{n}$. If $ |x|_0=Y_t $, then $ \sum_{x':|x'|_0=Y_t}\mathrm{P_{mut}}(x,x')\le (1-\frac{1}{n})^n+\sum_{j=1}^{Y_t}\binom{Y_t}{j}\cdot \frac{1}{n^j} \le \frac{1}{e}+\sum_{j=1}^{Y_t}(\frac{Y_t}{n})^j \le \frac{1}{e}+\frac{Y_t/n}{1-Y_t/n}$. Since $Y_t=X_t+h(Z_t)<b(l)+1=cn$ and $c=\frac{1}{3d^\mu}=\frac{1}{3\cdot 2^{\mu(\mu+4)}}$, we have $$ \sum_{x':|x'|_0=Y_t}\mathrm{P_{mut}}(x,x')\le \frac{1}{2}. $$
Furthermore, it must hold that $Z_{t+1}=Z_t+1$, thus we have
$$ X_t-X_{t+1}=h(Z_{t+1})-h(Z_t)=h(Z_t+1)-h(Z_t) .$$
By combining the above cases, we get
\begin{align}\label{Hajek-E+}
& \sum_{r< X_t}\mathrm{P}\left(X_{t+1}=r \mid a(l)<X_t<b(l)\right)\cdot \left(e^{X_t-r}-1\right)\\
&\le \sum_{j=1}^{Y_t}c^j\cdot \left(e^{j}-1\right)+ \left\{\begin{array}{ll}
\frac{1}{2}\cdot \left(e^{h(Z_t+1)-h(Z_t)}-1\right), & \;\; Z_t<\mu\\
0, & \;\;Z_t=\mu
\end{array}\right. \\
&\le  \sum_{j=1}^{Y_t}(ce)^j+\left\{\begin{array}{ll}
h(Z_t+1)-h(Z_t), & \;\; Z_t<\mu\\
0, & \;\; Z_t=\mu
\end{array}\right. \\
&\le  \frac{ce}{1-ce}+\left\{\begin{array}{ll}
h(Z_t+1)-h(Z_t), & \;\; Z_t<\mu\\
0, & \;\; Z_t=\mu
\end{array}\right.,
\end{align}
where the second inequality is by $0< h(Z_t+1)-h(Z_t)<1$ and $ e^s\le 1+2s $ for $ 0<s<1 $.

Next we consider $ X_{t+1}>X_t $. It is easy to verify that $ X_{t+1}>X_t $ iff in the $(t+1)$-th iteration, the newly generated solution $x'$ satisfies that $ |x'|_0>Y_t $ and one solution $ x^* $ in $P$ with $ |x^*|_0=Y_t $ is deleted. We first analyze the probability of generating a new solution $x'$ with $ |x'|_0>Y_t $. Suppose that the solution selected from $P$ for mutation is $x$. If $ |x|_0>Y_t $, it is sufficient that all bits of $ x $ are not flipped, thus $ \sum_{x':|x'|_0>Y_t}\mathrm{P_{mut}}(x,x')\ge (1-\frac{1}{n})^n\ge \frac{n-1}{en}$. If $ |x|_0=Y_t $, it is sufficient that only one 1-bit of $ x $ is flipped, thus $ \sum_{x':|x'|_0>Y_t}\mathrm{P_{mut}}(x,x')\ge (1-\frac{1}{n})^{n-1}\frac{n-Y_t}{n}\ge \frac{n-Y_t}{en}$. Note that $Y_t=X_t+h(Z_t)<b(l)+1=cn$ and $c=\frac{1}{3\cdot 2^{\mu(\mu+4)}}=\omega(1/n)$ for $\mu\leq  \sqrt{\log n}/2$. Thus,
$$ \sum_{x':|x'|_0>Y_t}\mathrm{P_{mut}}(x,x')\ge \frac{1-c}{e}. $$
We then analyze the probability of deleting one solution $x^*$ in $P$ with $|x^*|_0=Y_t$. Since it is sufficient that the fitness evaluation of all solutions in $P \cup \{x'\}$ with more than $Y_t$ 0-bits is affected by noise, the probability is at least $ 1/2^\mu $. We finally analyze $ X_{t}-X_{t+1}$. If $ Z_t=1 $, we have $ Y_{t+1}\ge Y_t+1 $, thus
$$ X_t-X_{t+1}=Y_{t}-Y_{t+1}+h(Z_{t+1})-h(Z_{t})\leq h(\mu)-1.$$
If $ Z_t\ge 2 $, we have $ Y_{t+1}=Y_t $ and $ Z_{t+1}=Z_t-1 $, thus
$$X_{t}-X_{t+1} =h(Z_{t+1})-h(Z_t)=h(Z_t-1)-h(Z_t).$$
Note that for $X_{t+1}>X_t$, $e^{X_t-X_{t+1}}-1<0$. Thus, we have
\begin{align}\label{Hajek-E-}
& \sum_{r> X_t}\mathrm{P}\left(X_{t+1}=r\mid a(l)<X_t<b(l)\right)\cdot \left(e^{X_t-r}-1\right)\\
&\le  \frac{1}{2^\mu}\cdot \frac{1-c}{e}\cdot \left\{
\begin{array}{ll}
e^{h(\mu)-1}-1, &\;\; Z_t=1\\
e^{h(Z_t-1)-h(Z_t)}-1, & \;\;Z_t\ge 2\\
\end{array}\right.\\
&\le \frac{1}{2^{\mu+1}}\cdot \frac{1-c}{e}\cdot \left\{
\begin{array}{ll}
h(\mu)-1, & \;\;Z_t=1\\
h(Z_t-1)-h(Z_t), &\;\; Z_t\ge 2\\
\end{array}\right.\\
&\le \frac{2}{d}\cdot \left\{
\begin{array}{ll}
h(\mu)-1, &\;\; Z_t=1\\
h(Z_t-1)-h(Z_t), & \;\;Z_t\ge 2\\
\end{array}\right.,
\end{align}
where the second inequality is by $ e^s-1\le s+s^2/2=s(1+s/2)\le s/2$ for $ -1<s<0 $, and the last is by $d=2^{\mu+4}$ and $c=\frac{1}{3\cdot 2^{\mu(\mu+4)}}$.

By combining Eq.~\eqref{Hajek-E+} and Eq.~\eqref{Hajek-E-}, we can get
\begin{align}
&\sum_{r\neq X_t}\mathrm{P}\left(X_{t+1}=r\mid a(l)<X_t<b(l)\right)\cdot \left(e^{X_t-r}-1\right)\\
&\le   \frac{ce}{1-ce}+\left\{\begin{array}{ll}
h(Z_t+1)-h(Z_t)+\frac{2}{d}(h(\mu)-1), &\;\; Z_t=1\\
h(Z_t+1)-h(Z_t)+\frac{2}{d}(h(Z_t-1)-h(Z_t)), &\;\; 1<Z_t<\mu\\
\frac{2}{d}(h(Z_t-1)-h(Z_t)), &\;\; Z_t=\mu
\end{array}\right..
\end{align}
If $ Z_t=1 $, $ \frac{1-h(\mu)}{h(Z_t+1)-h(Z_t)} =\frac{d^\mu-d^{\mu-1}}{d^\mu-1}\cdot \frac{d^\mu-1}{d^{\mu-1}-d^{\mu-2}}=d$,
and we have $ h(Z_t+1)-h(Z_t)+\frac{2}{d}(h(\mu)-1) =(h(Z_t+1)-h(Z_t))\cdot (1-d\cdot \frac{2}{d})\le h(\mu-1)-h(\mu) $. If $ 1< Z_t<\mu $, $ \frac{h(Z_t)-h(Z_t-1)}{h(Z_t+1)-h(Z_t)}=\frac{d^{\mu-Z_t+1}-d^{\mu-Z_t}}{d^{\mu-Z_t}-d^{\mu-Z_t-1}}=d $,
and similarly we have $ h(Z_t+1)-h(Z_t)+\frac{2}{d}(h(Z_t-1)-h(Z_t)) = h(Z_t)-h(Z_t+1)\le h(\mu-1)-h(\mu)$.
If $ Z_t=\mu $, $\frac{2}{d}(h(Z_t-1)-h(Z_t))= \frac{2}{d}(h(\mu-1)-h(\mu)) $. Thus, the above equation continues with
\begin{align}
&\le \frac{ce}{1-ce}+\frac{2}{d}(h(\mu-1)-h(\mu)) = \frac{1}{1/(ce)-1}+\frac{2}{d}\cdot \frac{1-d}{d^\mu-1}\\
&\le \frac{1}{d^\mu-1}-\frac{3}{2}\cdot \frac{1}{d^\mu-1}
= -\frac{1}{2(d^\mu-1)},
\end{align}
where the second inequality is by $ c=\frac{1}{3d^\mu} $ and $ d\ge 4 $. The condition of Theorem~\ref{negative-drift} (i.e., Eq.~\eqref{Hajek-cond1} or equivalently Eq.~\eqref{Hajek-cond2}) thus holds with $ p(l)=2(d^\mu-1) $.

Now we investigate $ D(l)=\max\left\{1,\mathrm{E}\left(e^{-\lambda(l)\cdot (X_{t+1}-b(l))} \mid X_t\ge b(l)\right)\right\}=  \max\left\{1,\mathrm{E}\left(e^{ b(l)-X_{t+1}} \mid X_t\ge b(l)\right)\right\}$ in Eq.~\eqref{Hajek-D}. To derive an upper bond on $ D(l) $, we only need to analyze $ \mathrm{E}\left(e^{b(l)-X_{t+1}}\mid X_t\ge b(l)\right) $.
\begin{align}
&\mathrm{E}\left(e^{b(l)-X_{t+1}}\mid X_t\ge b(l)\right)\\
&=\sum_{r\ge b(l)}\mathrm{P}(Y_{t+1}=r \mid X_t\ge b(l)) \cdot \mathrm{E}\left(e^{b(l)-X_{t+1}}\mid X_t\ge b(l), Y_{t+1}=r\right)\\
&\quad +\sum_{r< b(l)}\mathrm{P}(Y_{t+1}=r\mid X_t\ge b(l))\cdot \mathrm{E}\left(e^{b(l)-X_{t+1}}\mid X_t\ge b(l), Y_{t+1}=r\right).
\end{align}
When $Y_{t+1}=r \geq b(l)$, we have $ b(l)-X_{t+1}=b(l)-Y_{t+1}+h(Z_{t+1})\le h(Z_{t+1}) < 1$. Next we consider the case that $ Y_{t+1}<b(l) $. Since $ X_t=Y_t-h(Z_t)\ge b(l) $, we have $ Y_t\ge b(l)>Y_{t+1} $, which implies that $ Y_t\ge \lceil b(l)\rceil$ and $Y_{t+1}\le \lceil b(l)\rceil-1 $. To make $Y_{t+1}=r\leq \lceil b(l)\rceil-1$, it is necessary that a new solution $x'$ with $|x'|_0=r \leq \lceil b(l)\rceil-1$ is generated by mutation. Let $x$ denote the solution selected from the population $P$ for mutation. Note that $|x|_0 \geq Y_t \geq \lceil b(l)\rceil$. Then, for $r\leq \lceil b(l)\rceil-1$, $\mathrm{P}(Y_{t+1}=r\mid X_t\ge b(l)) \leq \sum_{x':|x'|_0=r}\mathrm{P_{mut}}(x,x')\le \sum_{x':|x'|_0=r}\mathrm{P_{mut}}(x^{\lceil b(l)\rceil},x')\leq \binom{\lceil b(l)\rceil}{\lceil b(l)\rceil-r}(\frac{1}{n})^{\lceil b(l)\rceil-r}\le (\frac{\lceil b(l)\rceil}{n})^{\lceil b(l)\rceil-r}$. Furthermore, for $Y_{t+1}<Y_t$, it must hold that $Z_{t+1}=1$, and thus $b(l)-X_{t+1}=b(l)-Y_{t+1}+h(Z_{t+1})=b(l)-Y_{t+1}$. Thus, the above equation continues with
\begin{align}
&\le  e+\sum_{r\le \lceil b(l)\rceil-1}\left(\frac{\lceil b(l)\rceil}{n}\right)^{\lceil b(l)\rceil-r}\cdot e^{b(l)-r}\leq  e+\sum_{j=1}^{\lceil b(l)\rceil}\left(\frac{\lceil b(l)\rceil}{n}\right)^j\cdot e^{j} \\
&\le e+\frac{e\lceil b(l)\rceil/n}{1-e\lceil b(l)\rceil/n}=  e+\frac{1}{n/(e\lceil b(l)\rceil)-1} \le e+\frac{1}{1/(ce)-1} \le e+1,
\end{align}
where the fourth inequality is by $ \lceil b(l)\rceil\le b(l)+1=cn $ and the last inequality is by $ c=\frac{1}{3d^\mu} $. Thus,
$$  D(l)=\max\left\{1,\mathrm{E}\left(e^{b(l)-X_{t+1}}\mid X_t\ge b(l)\right)\right\}\le e+1. $$

Let $ L(l)=e^{cn/2} $ in Theorem~\ref{negative-drift}. As $\mu\le \sqrt{\log n}/2$, we have
$$3d^\mu= 3\cdot 2^{\mu(\mu+4)}\le 3\cdot 2^{(\log n)/4+2\sqrt{\log n}} \le 2^{(\log n)/2}=n^{1/2},$$
where the last inequality holds with large enough $n$. Thus, $ cn=\frac{n}{3d^\mu}\ge n^{1/2} $. By Theorem~\ref{negative-drift}, we get
\begin{align}
&\mathrm{P}(T(l)\le e^{cn/2} \mid X_0\ge b(l))\le e^{1-cn}\cdot e^{cn/2}\cdot (e+1)\cdot 2(d^\mu-1)=e^{-\Omega(n^{1/2})}.
\end{align}
By Chernoff bounds, for any $ x $ chosen from $ \{0,1\}^n $ u.a.r., $ \mathrm{P}(|x|_0< cn)=e^{-\Omega(n)}$, where $cn=\frac{n}{3d^\mu}=\frac{n}{3\cdot 2^{\mu(\mu+4)}} \leq \frac{n}{96}$. By the union bound, $ \mathrm{P}(Y_0< cn) \le \mu\cdot e^{-\Omega(n)}=e^{-\Omega(n)}$, which implies that $ \mathrm{P}(X_0<b(l))=\mathrm{P}(Y_0-h(Z_0)<b(l))\le \mathrm{P}(Y_0<b(l)+1)=\mathrm{P}(Y_0<cn) =e^{-\Omega(n)} $. Thus, the expected running time is exponential.
\end{myproof}

\subsection{Offspring Populations}\label{sec-offspring}

Next, we show the superiority of using offspring populations over sampling on the robustness to noise. We prove in Theorem~\ref{theo-population-offpsring} that for symmetric noise with $C=0$, the (1+$\lambda$)-EA with $\lambda=8\log n$ can find the optimum in $O(n\log^2n)$ time. By using offspring populations, the probability of losing the current fitness becomes very small. This is because a fair number of offspring solutions with fitness not worse than the current fitness will be generated with a high probability in the reproduction of each iteration of the (1+$\lambda$)-EA, and the current fitness becomes worse only if all these good offspring solutions and the parent solution are evaluated incorrectly, the probability of which can be very small by using at least a logarithmic offspring population size. Thus, using offspring populations can lead to an efficient optimization. Note that the reason for the effectiveness of using offspring populations found here is consistent with that in~\cite{giessen2014robustness}.

\begin{theorem}\label{theo-population-offpsring}
For the (1+$ \lambda $)-EA solving OneMax under symmetric noise with $C=0$, if $\lambda=8\log n$, the expected running time is $O(n\log^2 n)$.
\end{theorem}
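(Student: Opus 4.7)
The plan is to apply the multiplicative drift theorem (Theorem~\ref{multiplicative-drift}) with distance function $V(x)=|x|_0$, so that $V=0$ iff $x=1^n$, mirroring the structure of the proof of Theorem~\ref{theo-population-1}. The key structural observation about reverse noise is that $f^{\mathrm{n}}(z)\in\{f(z),-f(z)\}$; consequently, whenever the set $S$ of correctly-evaluated offspring is non-empty, the maximum of $f^{\mathrm{n}}$ over the $\lambda$ offspring equals $\max_{z\in S}f(z)\ge 0$, and the selected $z^*\in\arg\max f^{\mathrm{n}}$ is an element of $S$ attaining this maximum, so $f^{\mathrm{n}}(z^*)=f(z^*)$. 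This reduces the entire drift analysis to reasoning about the set $S$.

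For the positive drift at state $V_t=i\ge 1$, I would use the good event $G$ that at least one offspring $z_j$ has $f(z_j)=n-i+1$ (produced by flipping exactly one of the $i$ zero-bits of the parent) and is correctly evaluated. The per-offspring probability is at least $(i/n)(1-1/n)^{n-1}(1/2)\ge i/(2en)$, so $\mathrm{P}(G)\ge 1-(1-i/(2en))^{\lambda}\ge 1-e^{-\lambda i/(2en)}=\Omega(\min(1,i\log n/n))$. On $G$, the selected $z^*$ satisfies $f(z^*)\ge n-i+1$, and since $f^{\mathrm{n}}(x)\le f(x)=n-i<f(z^*)$ regardless of how the parent is re-evaluated, $z^*$ is accepted, giving $V_{t+1}\le i-1$ and a positive drift contribution of at least $\mathrm{P}(G)$.

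For the negative drift, $V_{t+1}>i$ requires $f(z^*)<n-i$, which by the structural observation can happen only on the bad event $B$ that no $z\in S$ satisfies $f(z)\ge n-i$ (with $B$ holding vacuously when $S=\emptyset$). Because each offspring is a clone of $x$ with probability $(1-1/n)^n\ge 1/e$ and is independently evaluated correctly with probability $1/2$, each offspring rules out $B$ with probability at least $1/(2e)$, so $\mathrm{P}(B)\le(1-1/(2e))^{\lambda}$; for $\lambda=8\log n$ this is $n^{-c}$ for a constant $c>2$. Acceptance additionally requires the parent to be incorrectly evaluated (otherwise $f^{\mathrm{n}}(x)=n-i$ exceeds every noisy offspring fitness under $B$), contributing a further factor of $1/2$; since $V$ can grow by at most $n$, the negative drift is bounded by $(n/2)\cdot\mathrm{P}(B)=O(n^{1-c})=o(\log n/n)$.

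Combining the two bounds yields $\mathrm{E}(V(\xi_t)-V(\xi_{t+1})\mid V(\xi_t)=i)\ge\Omega(\min(1,i\log n/n))$ for all $i\ge 1$, whence the multiplicative rate $\mathrm{E}(V-V')/V$ is $\Omega(1/n)$, attained at $i$ linear in $n$. Theorem~\ref{multiplicative-drift} then gives $\mathrm{E}(\tau\mid\xi_0)=O(n\log n)$ iterations, and multiplying by the per-iteration cost $1+\lambda=O(\log n)$ produces the claimed $O(n\log^2 n)$ expected running time. The main obstacle is the negative drift step: one must verify that the constant $8$ in $\lambda=8\log n$ is large enough that $n\cdot\mathrm{P}(B)=o(\log n/n)$, i.e.\ that the exponent $c$ in $\mathrm{P}(B)=n^{-c}$ strictly exceeds $2$ so that the negative contribution is dominated by the positive drift $\Omega(\log n/n)$ at the hardest state $i=1$; this is a direct arithmetic check on $(1-1/(2e))^{8\log n}$.
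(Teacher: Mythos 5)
Your proposal is correct and follows essentially the same route as the paper's proof: multiplicative drift on $V(x)=|x|_0$, a positive drift of order $\min\{1,\lambda i/n\}$ obtained by generating and correctly evaluating a one-bit-improved offspring, and a negative drift controlled by the event that all offspring of fitness at least the parent's, together with the parent, are misevaluated, which for $\lambda=8\log n$ has probability $O(n^{-2.3})$. The only blemish is the claim $(1-1/n)^n\ge 1/e$ (false, though $\ge(1-1/n)/e$); counting also offspring that flip a single 0-bit, as the paper does via $q\ge 1/e$, restores the per-offspring constant $1/(2e)$ without affecting anything downstream.
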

\begin{myproof}
We apply Theorem~\ref{multiplicative-drift} to prove this result. Each state of the corresponding Markov chain $\{\xi_t\}^{+\infty}_{t=0}$ is just a solution here. That is, $\xi_t$ corresponds to the solution after running $t$ iterations of the (1+$\lambda$)-EA. We design the distance function as for $x \in \{0,1\}^n$, $ V(x)=|x|_0$. Assume that currently $|x|_0=i$, where $1 \leq i \leq n$. To analyze $\mathrm{E}(V(\xi_t)-V(\xi_{t+1}) \mid \xi_t=x)$, we divide it into two parts as in the proof of Theorem~\ref{theo-population-1}. That is,
\begin{align}
&\mathrm{E}(V(\xi_t)-V(\xi_{t+1}) \mid \xi_t=x)=\mathrm{E}^+-\mathrm{E}^-, \quad \text{where}\\
&\mathrm{E}^+=\sum_{y: |y|_0<i}\;\mathrm{P}(\xi_{t+1}=y \mid \xi_t=x)\cdot (i-|y|_0),\\
&\mathrm{E}^-=\sum_{y: |y|_0>i}\;\mathrm{P}(\xi_{t+1}=y \mid \xi_t=x)\cdot (|y|_0-i).
\end{align}

For $ \mathrm{E}^+ $, since $|y|_0<i$, we have $i-|y|_0\geq 1$. Thus,
\begin{align}
\mathrm{E}^+\geq \sum_{y: |y|_0<i}\mathrm{P}(\xi_{t+1}=y \mid \xi_t=x)=\mathrm{P}(|\xi_{t+1}|_0<i \mid \xi_t=x).
\end{align}
To make $|\xi_{t+1}|_0<i$, it requires that at least one solution $x'$ with $|x'|_0<i$ is generated in the reproduction and at least one of them is evaluated correctly. To generate a solution $x'$ with $|x'|_0<i$ by mutating $x$, it is sufficient that only one 0-bit of $x$ is flipped, whose probability is $\frac{i}{n}\cdot (1-\frac{1}{n})^{n-1} \ge \frac{i}{en}$. Thus, in each iteration of the (1+$\lambda$)-EA, the probability of generating at least one offspring solution $x'$ with $ |x'|_0<i$ is at least
\begin{align}
1-\left(1-\frac{i}{en}\right)^{\lambda} \ge 1-e^{-\lambda \cdot \frac{i}{en}}\ge 1-\frac{1}{1+\lambda\cdot \frac{i}{en}}.
\end{align}
If $ \lambda\cdot \frac{i}{en}> 1 $, $ 1-(1-\frac{i}{en})^{\lambda}\ge \frac{1}{2} $; otherwise, $ 1-(1-\frac{i}{en})^{\lambda}\ge \frac{\lambda\cdot \frac{i}{en}}{1+\lambda\cdot \frac{i}{en}} \ge \frac{\lambda \cdot i}{2en}$. Thus, $ 1-(1-\frac{i}{en})^{\lambda}\ge \min\{\frac{1}{2}, \frac{\lambda \cdot i}{2en}\}= \min\{\frac{1}{2}, \frac{4i\log n}{en}\} $, where the equality is by $\lambda=8\log n$. Since each solution is evaluated correctly with probability $ \frac{1}{2} $, $\mathrm{P}(|\xi_{t+1}|_0<i \mid \xi_t=x) \geq \min\{\frac{1}{2}, \frac{4i\log n}{en}\} \cdot \frac{1}{2}$. Thus,
$$
\mathrm{E}^+\ge \min\left\{\frac{1}{2}, \frac{4i\log n}{en}\right\} \cdot \frac{1}{2}=\min\left\{\frac{1}{4}, \frac{2i\log n}{en}\right\}\ge \frac{i}{4n}.
$$

For $ \mathrm{E}^-$, since $|y|_0 -i\leq n-i$, we have
\begin{align}
\mathrm{E}^-\leq (n-i)\cdot \mathrm{P}(|\xi_{t+1}|_0>i \mid \xi_t=x).
\end{align}
Let $ q=\sum_{x': |x'|_0\leq i}\mathrm{P}_{\mathrm{mut}}(x,x') $ denote the probability of generating an offspring solution $ x' $ with at most $i$ 0-bits by mutating $ x $. Since it is sufficient that no bit is flipped or only one 0-bit is flipped in mutation, $q\geq (1-\frac{1}{n})^n+\frac{i}{n}\cdot (1-\frac{1}{n})^{n-1}\ge \frac{1}{e} $. Now we analyze $\mathrm{P}(|\xi_{t+1}|_0> i \mid \xi_t=x)$. Assume that in the reproduction, exactly $ k $ offspring solutions with at most $ i $ 0-bits are generated, where $0\le k\le \lambda$; it happens with probability $\binom{\lambda}{k}\cdot q^k(1-q)^{\lambda-k}$. If $ k<\lambda $, the solution in the next generation has more than $ i $ 0-bits (i.e., $|\xi_{t+1}|_0>i$) iff the fitness evaluation of these $ k $ offspring solutions and the parent solution~$x$ are all affected by noise, whose probability is $\frac{1}{2^{k+1}} $. If $ k=\lambda $, the solution in the next generation must have at most $ i $ 0-bits (i.e., $|\xi_{t+1}|_0\leq i$). Thus, we have
\begin{align}\label{lam-work-V+}
\mathrm{P}(|\xi_{t+1}|_0> i \mid \xi_t=x)&= \sum_{k=0}^{\lambda-1}\binom{\lambda}{k}\cdot q^k(1-q)^{\lambda-k}\cdot \frac{1}{2^{k+1}} \\
&\le  \frac{1}{2} \left(1-\frac{q}{2}\right)^{\lambda}\le \frac{1}{2}\left(1-\frac{1}{2e}\right)^{\lambda},
\end{align}
where the last inequality is by $q \geq \frac{1}{e}$. We then get
\begin{align}
\mathrm{E}^-\le (n-i)\cdot \frac{1}{2}\cdot \left(1-\frac{1}{2e}\right)^{8\log n}\le \frac{n-i}{2n^{2.3}}\le \frac{1}{2n^{1.3}}.
\end{align}

By calculating $\mathrm{E^+}-\mathrm{E^-}$, we have
\begin{align}
\mathrm{E}(V(\xi_t)-V(\xi_{t+1}) \mid V(\xi_t)=i)\ge \frac{i}{4n}-\frac{1}{2n^{1.3}}\ge \frac{i}{5n}=\frac{1}{5n}\cdot V(\xi_t),
\end{align}
where the second inequality holds with large enough $ n $. Thus, by Theorem~\ref{multiplicative-drift}, $$\mathrm{E}(\tau\mid \xi_0)\le 5n(1+\ln n) =O(n\log n),$$ which implies that the expected running time is $O(n\log^2 n)$, since it needs to reevaluate the parent solution and evaluate the $\lambda=8\log n$ offspring solutions in each iteration.\vspace{0.8em}
\end{myproof}

Furthermore, we prove that an offspring population size $\lambda\le(\log n)/10$ is not sufficient to allow solving the noisy problem in polynomial time. This also implies that the effective value $\lambda=8\log n$ derived in the above theorem is nearly tight. From the proof, we can find that $\lambda\le(\log n)/10$ cannot guarantee a sufficiently small probability of losing the current fitness, and thus the optimization is inefficient.

\begin{theorem}\label{theo-offspring-population-2}
For the (1+$ \lambda $)-EA solving OneMax under symmetric noise with $C=0$, if $\lambda\le (\log n)/10$, the expected running time is exponential.
\end{theorem}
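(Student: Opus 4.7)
The plan is to apply the simplified negative drift theorem (Theorem~\ref{simplified-drift}) with $X_t = |x_t|_0$ on the interval $[a,b] = [0, cn]$, for a sufficiently small constant $c>0$ depending on $\lambda$. As in the proofs of Theorems~\ref{theo-sample-2} and~\ref{theo-population-1}, I would decompose the one-step drift as $\mathrm{E}(X_t - X_{t+1}\mid X_t = i) = \mathrm{E}^+ - \mathrm{E}^-$, with $\mathrm{E}^+$ and $\mathrm{E}^-$ collecting the contributions of the transitions that decrease and increase the number of $0$-bits, respectively.

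For the upper bound on $\mathrm{E}^+$, I would observe that whenever $X_{t+1}<i$ the accepted solution $z$ is one of the $\lambda$ offspring $y_1,\ldots,y_\lambda$, so $(i-X_{t+1})^+ \le \max_j (i-|y_j|_0)^+ \le \sum_j (i-|y_j|_0)^+$, giving $\mathrm{E}^+ \le \lambda\cdot \mathrm{E}[(i-|y|_0)^+]$ for a single offspring $y$. The latter quantity is already bounded by $i/n$ in the proof of Theorem~\ref{theo-sample-2} (inherited from~\cite{qian2018noise}), so $\mathrm{E}^+ \le \lambda i/n \le \lambda c$. For the lower bound on $\mathrm{E}^-$, I would restrict attention to the sub-event $E$ that every one of the $\lambda$ offspring flips exactly one $1$-bit of $x$, so each has $i+1$ zero-bits; this event has probability at least $((n-i)/(en))^\lambda \ge ((1-c)/e)^\lambda = \Omega(1)$ for $\lambda = O(1)$. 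Conditioned on $E$, the $\arg\max$ offspring $z$ has noisy fitness in $\{n-i-1,-(n-i-1)\}$ while the parent has noisy fitness in $\{n-i,-(n-i)\}$, and a short case analysis (using $n-i-1 > -(n-i)$ and $-(n-i-1) > -(n-i)$) shows that $z$ is accepted precisely when the parent is evaluated as $-(n-i)$, which happens with probability $1/2$ irrespective of the offspring evaluations. Hence $\mathrm{E}^- \ge \tfrac{1}{2}((1-c)/e)^\lambda$, a positive constant, and choosing $c$ small enough that $\lambda c$ is less than half of this bound establishes condition~(1) with some $\epsilon > 0$.

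Condition~(2) is standard: $|X_{t+1}-X_t|\ge j$ forces at least one offspring to differ from $x$ in $j$ or more bits, and a union bound gives $\mathrm{P}(|X_{t+1}-X_t|\ge j\mid X_t>0) \le \lambda\binom{n}{j}/n^j \le \lambda/j! \le 2\lambda/2^j$, so $\delta = 1$ and $r(l) = 2\lambda = O(1)$ work. With $l = cn$ we have $l/r(l) = \Omega(n)$, so Theorem~\ref{simplified-drift} yields $\mathrm{P}(T \le 2^{\Omega(n)}) = 2^{-\Omega(n)}$. A Chernoff bound on the uniformly random initial solution shows $X_0 \ge cn$ with probability $1 - e^{-\Omega(n)}$ for any $c<1/2$, so the expected number of iterations is $2^{\Omega(n)}$, and multiplying by the $O(1)$ cost per iteration (since $\lambda = O(1)$) yields an exponential expected running time.

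The step requiring the most care is the lower bound on $\mathrm{E}^-$: one must verify that under $E$ the $\arg\max$ of the $\lambda$ noisy offspring fitnesses is accepted whenever the parent is evaluated as $-(n-i)$, independently of how the individual offspring are evaluated. This rests on the specific symmetry of reverse noise, and is precisely the mechanism that breaks once $\lambda$ grows with $n$, since then the probability that the parent ``loses'' in the presence of only non-improving offspring would shrink to $o(1)$ instead of remaining $\Theta(1)$, and the drift would flip sign.
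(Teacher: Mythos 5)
Your proposal is correct and follows the same skeleton as the paper's proof: the simplified negative drift theorem (Theorem~\ref{simplified-drift}) applied to $X_t=|x|_0$ on an interval $[0,\Theta(n)]$, the same decomposition into $\mathrm{E}^+-\mathrm{E}^-$, the same bound $\mathrm{E}^+\le \lambda i/n$ inherited from Eq.~(\ref{eq-positive-drift-1}), and the identical verification of condition~(2) with $\delta=1$ and $r(l)=2\lambda=O(1)$. The one step where you genuinely diverge is the lower bound on $\mathrm{E}^-$. The paper conditions on the exact number $k$ of offspring having at most $i$ zero-bits and evaluates the binomial sum $\sum_{k=0}^{\lambda-1}\binom{\lambda}{k}q^k(1-q)^{\lambda-k}2^{-(k+1)}=\frac{1}{2}\bigl((1-\frac{q}{2})^{\lambda}-(\frac{q}{2})^{\lambda}\bigr)\ge \lambda/(8(2e)^{\lambda})$, where $q\ge 1/e$ is the per-offspring probability of being not worse than the parent; you instead restrict to the sub-event that every offspring flips exactly one $1$-bit, on which acceptance of the strictly worse best offspring is decided solely by whether the parent is misevaluated, yielding $\mathrm{E}^-\ge\frac{1}{2}((1-c)/e)^{\lambda}$. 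Both bounds are positive constants for $\lambda=O(1)$ and both decay exponentially in $\lambda$, so both correctly expose why the argument breaks at $\lambda=\Theta(\log n)$; your version is a little more elementary (no binomial identity needed) and isolates the mechanism more transparently, while the paper's covers the full event that some offspring is not worse and so retains an extra factor of $\lambda$, which is immaterial here. Your acceptance analysis under the all-one-bit-worse event is sound because the parent is reevaluated each iteration, so its noise draw is independent of the mutation outcomes and of the offspring evaluations.
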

\begin{myproof}
We apply Theorem~\ref{simplified-drift-scaling} to prove this result. Let $X_t=|x|_0$ denote the number of 0-bits of the solution $x$ maintained by the (1+$\lambda$)-EA after running $t$ iterations. We consider the interval $[0,\frac{n}{16(2e)^\lambda}]$, i.e., $a=0$ and $b=\frac{n}{16(2e)^\lambda}$ in Theorem~\ref{simplified-drift-scaling}.
	
We analyze $\mathrm{E}(X_t-X_{t+1} \mid X_t=i)$ for $1\leq i<\frac{n}{16(2e)^\lambda}$. We divide the drift as follows:
\begin{align}
&\mathrm{E}(X_t-X_{t+1} \mid X_t=i)=\mathrm{E}^+-\mathrm{E}^-, \quad \text{where}\\
&\mathrm{E}^+=\sum_{j=0}^{i-1}\mathrm{P}(X_{t+1}=j \mid X_t=i)\cdot (i-j),\\
&\mathrm{E}^-=\sum_{j=i+1}^n\mathrm{P}(X_{t+1}=j \mid X_t=i)\cdot (j-i).
\end{align}
For $ \mathrm{E^+} $, we need to derive an upper bound on $ \mathrm{P}(X_{t+1}=j \mid X_t=i) $ for $ j<i $. Note that $ X_{t+1}=j $ implies that at least one offspring solution $x'$ with $ |x'|_0=j $ is generated by mutating $x$ in the reproduction. Thus, we have
\begin{align}
\mathrm{P}(X_{t+1}= j \mid X_t=i)&\le 1-\left(1-\sum_{x':|x'|_0=j}\mathrm{P_{mut}}(x,x')\right)^{\lambda}\\
&\le \lambda\cdot \sum_{x':|x'|_0=j}\mathrm{P_{mut}}(x,x'),
\end{align}
where the second inequality is by Bernoulli's inequality. Then, we get
\begin{align}\label{eq:mid12}
\mathrm{E^+}&\le \sum_{j=0}^{i-1}\lambda\cdot \left(\sum_{x':|x'|_0=j}\mathrm{P_{mut}}(x,x')\right)\cdot (i-j)\\
&=\lambda \cdot \sum\limits_{x':|x'|_0 <i} \mathrm{P}_{\mathrm{mut}}(x,x')\cdot (i-|x'|_0) \\
&=\lambda \cdot \sum^i_{k=1} k \cdot \mathrm{P}(X-Y=k)\\
&=\lambda \cdot \sum\limits^i_{k=1} k \cdot \sum\limits^i_{j=k} \mathrm{P}(X=j) \cdot \mathrm{P}(Y=j-k)\\
&=\lambda \cdot \sum\limits^i_{j=1} \sum\limits^j_{k=1} k \cdot \mathrm{P}(X=j) \cdot \mathrm{P}(Y=j-k)\\
&\leq \lambda \sum\limits^i_{j=1}  j \cdot \mathrm{P}(X=j)=\lambda\cdot \frac{i}{n},
\end{align}
where the second equality holds by letting $X$ and $Y$ denote the number of flipped 0-bits and 1-bits in mutating $x$ (where $|x|_0=i$), respectively, and the last equality holds because $X$ satisfies the binomial distribution $B(i,\frac{1}{n})$. For $\mathrm{E^-}$, we easily have
$$
\mathrm{E}^-\geq \sum_{j=i+1}^n\mathrm{P}(X_{t+1}=j \mid X_t=i)=\mathrm{P}(X_{t+1}> i \mid X_t=i).
$$
Let $q=\sum_{x': |x'|_0\leq i} \mathrm{P}_{\mathrm{mut}}(x,x')$, where $x$ is any solution with $i$ 0-bits. Using the same analysis as Eq.~(\refeq{lam-work-V+}), we can get
\begin{align}
&\mathrm{P}(X_{t+1}> i \mid X_t=i)
=\sum_{k=0}^{\lambda-1}\binom{\lambda}{k}\cdot q^k(1-q)^{\lambda-k}\cdot \frac{1}{2^{k+1}}\\
&=\frac{1}{2}\cdot \left( \left(1-\frac{q}{2}\right)^{\lambda}-\left(\frac{q}{2}\right)^{\lambda}\right)=\frac{1}{2}\cdot \left( \left(\frac{q}{2}+1-q\right)^{\lambda}-\left(\frac{q}{2}\right)^{\lambda}\right) \\
&=\frac{1}{2}\cdot \left(\sum^{\lambda}_{i=0} \binom{\lambda}{i}\left(\frac{q}{2}\right)^{\lambda-i} (1-q)^i-\left(\frac{q}{2}\right)^{\lambda}\right)\\
&\geq \frac{1}{2}\cdot \left(\left(\frac{q}{2}\right)^{\lambda}+\lambda \left(\frac{q}{2}\right)^{\lambda-1}(1-q)-\left(\frac{q}{2}\right)^{\lambda}\right)\\
&= \frac{1}{2}\cdot \lambda\left(\frac{q}{2}\right)^{\lambda-1} (1-q) \ge \lambda \cdot\frac{1}{8(2e)^\lambda},
\end{align}
where the last inequality is by $ q\ge \frac{1}{e} $ and $ 1-q\ge \sum_{x':|x'|_0=i+1}\mathrm{P_{\mathrm{mut}}}(x,x')\ge \frac{n-i}{en}\ge \frac{1}{4} $. Thus, $\mathrm{E^-}\ge \lambda/(8 (2e)^\lambda)$. By calculating $\mathrm{E^+}-\mathrm{E^-}$, we have
$$	\mathrm{E}(X_t-X_{t+1}\mid X_t=i)\le \lambda\cdot \frac{i}{n}-\lambda\cdot \frac{1}{8(2e)^\lambda}\le - \frac{\lambda}{16(2e)^\lambda},$$
where the last inequality is by $ i<\frac{n}{16(2e)^\lambda} $. Thus, condition~(1) of Theorem~\ref{simplified-drift-scaling} holds with $ \epsilon=\frac{\lambda}{16(2e)^\lambda} $.

Next we examine conditions~(2) and~(3) of Theorem~\ref{simplified-drift-scaling} by setting $r=n^{1/6}$. To make $|X_{t+1}-X_{t}|\geq jr$, it is necessary that at least one offspring solution generated by mutating $ x $ flips at least $\lfloor jr \rfloor$ bits of $x$. Let $ p(k) $ denote the probability that at least $ k $ bits of $ x $ are flipped in mutation. We easily have $p(k) \leq \binom{n}{k}\frac{1}{n^k}$. Thus,
\begin{align}\label{eq:mid10}
&\mathrm{P}(|X_{t+1}-X_{t}\mid \ge jr \mid X_t \ge 1) \le 1-(1-p(\lfloor jr \rfloor))^\lambda \\
&\le \lambda \cdot p(\lfloor jr \rfloor)
 \le \lambda\cdot \binom{n}{\lfloor jr \rfloor}\frac{1}{n^{\lfloor jr \rfloor}} \le 2\lambda \cdot \frac{1}{2^{\lfloor jr \rfloor}}\le \frac{4\lambda}{(2^{n^{1/6}})^j} \le \frac{1}{e^j},
\end{align}
where the last inequality holds with $\lambda\le (\log n)/10$ and large enough $n$. Thus, condition~(2) of Theorem~\ref{simplified-drift-scaling} holds. Since $\epsilon=\frac{\lambda}{16(2e)^\lambda}$ and $l=b-a=\frac{n}{16(2e)^\lambda}$, we have
\begin{align}
\frac{n^{1/2}}{256}\le  \epsilon l=\frac{n\lambda}{(16(2e)^\lambda)^2} \le n\log n,
\end{align}
where the first inequality is by $(2e)^\lambda\leq (2e)^{(\log n)/10}=(n^{\log(2e)})^{1/10}\le n^{1/4}$. Thus, we have
\begin{align}
\sqrt{\epsilon l/(132\ln (\epsilon l))}\ge \sqrt{n^{1/2}/(256\cdot 132\cdot2 \cdot \ln n)}\ge n^{1/6},
\end{align}
where the first inequality is by $\ln (\epsilon l)\le \ln (n\log n)\le 2\ln n$, and the second holds with large enough $n$. Furthermore, we have $\epsilon^2 l=  \frac{n\lambda^2}{(16(2e)^\lambda)^3}\ge \frac{n}{16^3n^{3/4}}\ge n^{1/6}$. Thus, $ 1\le r \le \min\{\epsilon^2 l,\sqrt{\epsilon l/(132\ln (\epsilon l))}\}$ for large enough $ n $, implying that condition~(3) of Theorem~\ref{simplified-drift-scaling} holds.

Note that $ \epsilon l/(132r^2)\ge n^{1/2}/(256\cdot 132\cdot n^{1/3})=\Omega(n^{1/6}) $ and $ X_0\ge b=\frac{n}{16(2e)^{\lambda}}$ holds with a high probability under the uniform initial distribution. By Theorem~\ref{simplified-drift-scaling}, we get that the expected running time is exponential.\vspace{0.8em}
\end{myproof}

Therefore, to reduce the expected running time from exponential to polynomial for solving the OneMax problem under symmetric noise, Theorems~\ref{theo-population-1} and~\ref{theo-parent-population-2} imply that the smallest required parent population size $\mu$ belongs to $(\sqrt{\log n}/2,3\log n]$ when $C=2n$; Theorems~\ref{theo-population-offpsring} and~\ref{theo-offspring-population-2} imply that the smallest required offspring population size $\lambda$ belongs to $((\log n)/10,8\log n]$ when $C=0$. It is challenging to find their exact values. For example, if applying the drift theorems, one needs to design a distance function to measure the distance of a population to the set of optimal populations and analyze the distance change by one step. For the ($\mu$+1)-EA, the solutions in the parent population can vary considerably, making it difficult to design a distance function measuring the quality of the parent population well. Using the minimum number of 0-bits of the solution in the population as in the proof of Theorem~\ref{theo-population-1} is probably insufficient. For estimating the one-step distance change well, one needs to compute the distribution of the offspring solution accurately, which is also difficult as there are $\mu$ parent solutions to be uniformly selected for mutation. For the (1+$\lambda$)-EA, the distance function is much easier to be designed because there is only one parent solution. However, computing the distribution of the offspring solutions is still difficult, as there are $\lambda$ offspring solutions to be independently generated.

\section{Adaptive Sampling Can Work on Some Tasks Where Both Sampling and Populations Fail}\label{sec-adaptive}

In this section, we first theoretically examine whether there exist cases where using neither populations nor sampling is effective. We give a positive answer by considering OneMax under segmented noise. Next we prove that in such a situation, using adaptive sampling can be effective, which provides some theoretical justification for the good empirical performance of adaptive sampling in practice~\cite{syberfeldt2010evolutionary,zhang2007immune}.

As presented in Definition~\ref{def-seg-noise}, the OneMax problem is divided into four segments. In one segment, the fitness is evaluated correctly, while in the other three segments, the fitness is disturbed by different noises. All seven sub-functions in Definition~\ref{def-seg-noise} are plotted in Figure~\ref{fig-segmented-noise}. Note that for the last sub-function $-n^4-\delta$ where $\delta \sim \mathcal{U}[0,1]$, we plot its expectation, i.e., a constant function $-n^4-1/2$.

\begin{definition}[OneMax under Segmented Noise]\label{def-seg-noise}
For any $x\in \{0,1\}^n$, the noisy fitness value $f^{\mathrm{n}}(x)$ is calculated as:\vspace{0.2em}\\
(1) if $|x|_0>\frac{n}{50}$, $f^{\mathrm{n}}(x)=n-|x|_0$;\vspace{0.3em}\\
(2) if $\frac{n}{100}< |x|_0\leq \frac{n}{50}$,\vspace{-0.2em} \begin{align*}
f^{\mathrm{n}}(x)=\begin{cases}
n-|x|_0 & \text{with probability $1/2+1/\sqrt{n}$},\\
3n+|x|_0 & \text{with probability $1/2-1/\sqrt{n}$};
\end{cases}
\end{align*}
(3) if $\frac{n}{200}< |x|_0\leq \frac{n}{100}$,\vspace{-0.2em} \begin{align*}
f^{\mathrm{n}}(x)=\begin{cases}
4n(n-|x|_0) & \text{with probability $1-1/n$},\\
(2n+|x|_0)^3 & \text{with probability $1/n$};
\end{cases}
\end{align*}
(4) if $|x|_0\leq \frac{n}{200}$,\vspace{-0.8em}
\begin{align*}
f^{\mathrm{n}}(x)=\begin{cases}
n^4(n-|x|_0) & \text{with probability $1/5$},\\
-n^4-\delta & \text{with probability $4/5$},
\end{cases}
\end{align*}
where $\delta$ is randomly drawn from a continuous uniform distribution $\mathcal{U}[0,1]$, and $n/200\in \mathbb{N}^+$.
\end{definition}

\begin{figure*}[t!]\centering
\begin{minipage}[c]{1\linewidth}\centering
        \includegraphics[width=0.9\linewidth, height=0.7\linewidth]{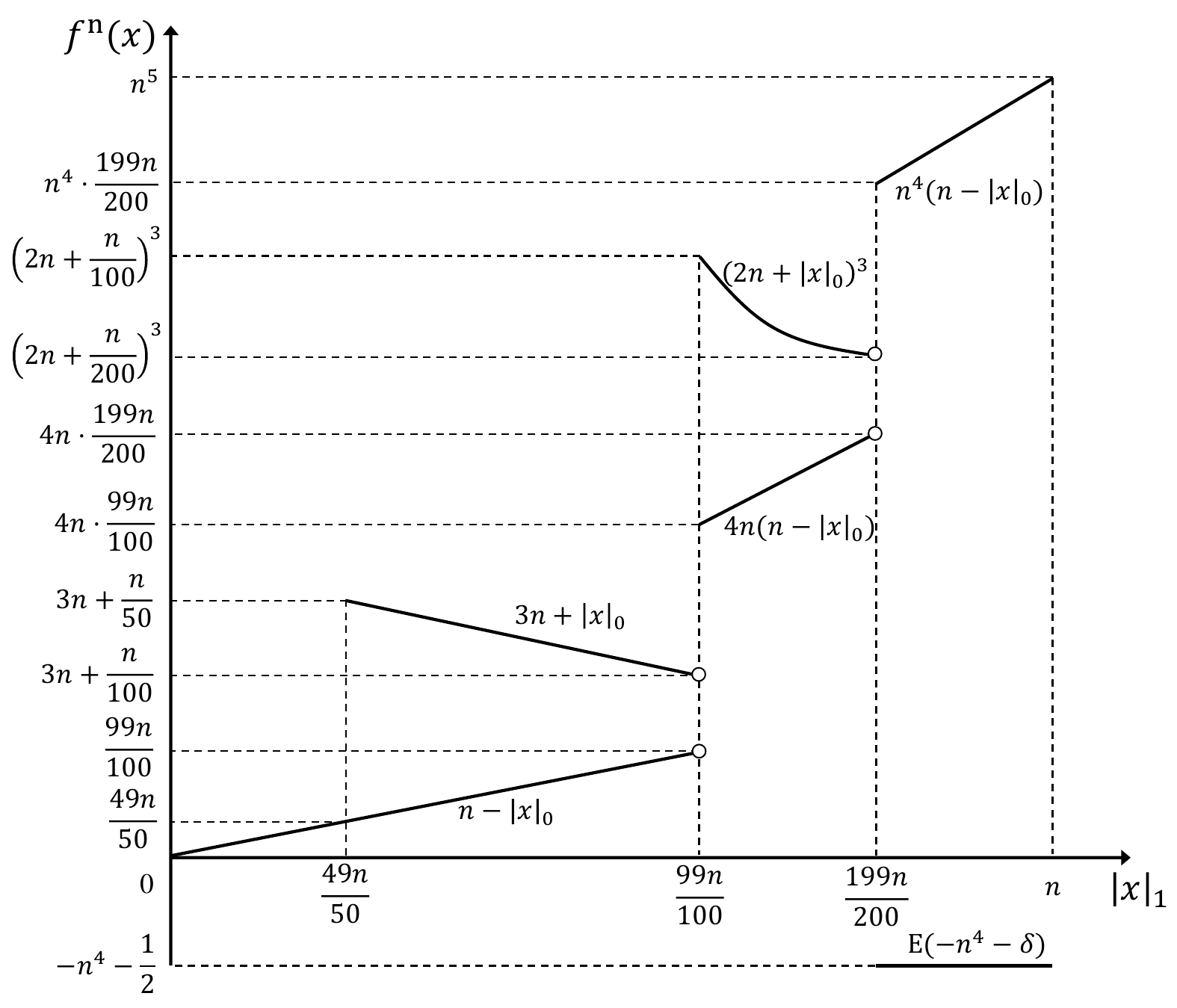}
\end{minipage}
\caption{The seven sub-functions appearing in Definition~\ref{def-seg-noise}. Note that the scale of axes is not strict for plotting all sub-functions clearly.}\label{fig-segmented-noise}
\end{figure*}

We prove in Theorem~\ref{theo-sample-3} that the expected running time of the (1+1)-EA using sampling with any sample size $m$ is exponential. From the proof, we can find the reason for the ineffectiveness of sampling. For two solutions $x$ and $x'$ with $|x'|_0=|x|_0+1$ (i.e., $f(x)=f(x')+1$), the expected gaps between $f^{\mathrm{n}}(x)$ and $f^{\mathrm{n}}(x')$ are positive and negative, respectively, in the segments of $\frac{n}{100}<|x|_0\leq \frac{n}{50}$ and $\frac{n}{200}<|x|_0\leq \frac{n}{100}$. Thus, in the former segment, a larger sample size is better since it will decrease $\mathrm{P}(\hat{f}(x) \leq \hat{f}(x'))$, while in the latter segment, a larger sample size is worse since it will increase $\mathrm{P}(\hat{f}(x) \leq \hat{f}(x'))$. Furthermore, there is no moderate sample size which can make a good tradeoff. Thus, sampling fails in this case. Lemmas~\ref{lemma_berry_esseen} and~\ref{lemma_Bernstein} show the Berry-Esseen and Bernstein inequalities, respectively, which will be used in the proof.

\begin{lemma}[Berry-Esseen Inequality~\cite{tyurin2010improvement}]\label{lemma_berry_esseen}
Let $Z_1,Z_2,\ldots,Z_m$ be i.i.d. random variables with $\mathrm{E}(Z_i)=0$, $\mathrm{Var}(Z_i)=\sigma^2>0$ and $\mathrm{E}(|Z_i|^3)=\rho<+\infty$. It holds that
$$
\mathrm{P}\left(\frac{(\sum^m_{i=1} Z_i/m)\sqrt{m}}{\sigma}\leq x\right)-\Phi(x)\geq -\frac{0.4785\rho}{\sigma^3\sqrt{m}},
$$
where $\Phi(x)$ denotes the cumulative distribution function of the standard normal distribution.
\end{lemma}

\begin{lemma}[Bernstein Inequality~\cite{devroye2012combinatorial}]\label{lemma_Bernstein}
Let $Z_1,Z_2,\ldots,Z_m$ be independent random variables with $\mathrm{E}(Z_i)=0$ and $|Z_i|\le c$ for any $i \in \{1,2,\ldots,m\}$. Let $\sigma^2=\sum_{i=1}^{m}\mathrm{Var}(Z_i)/m$. It holds that for any $t>0$,
$$
\mathrm{P}\left(\sum_{i=1}^{m}Z_i>t\right)\le \mathrm{exp}\left(-\frac{t^2}{2m\sigma^2+2ct/3}\right).
$$
\end{lemma}

\begin{theorem}\label{theo-sample-3}
For the (1+1)-EA solving OneMax under segmented noise, if using sampling with any sample size $m \geq 1$, the expected running time is exponential.
\end{theorem}
\begin{myproof}
We divide the proof into two parts according to the range of $m$. Let $X_t=|x|_0$ denote the number of 0-bits of the solution $x$ maintained by the (1+1)-EA after running $t$ iterations. When $m\leq n^3$, we apply Theorem~\ref{simplified-drift} to prove that starting from $X_0 \geq \frac{n}{50}$, the expected number of iterations until $X_t \leq \frac{n}{100}$ is exponential. When $m> n^3$, we apply Theorem~\ref{simplified-drift} to prove that starting from $X_0 \geq \frac{n}{100}$, the expected number of iterations until $X_t \leq \frac{n}{200}$ is exponential. Due to the uniform initial distribution, both $X_0 \geq \frac{n}{50}$ and $X_0 \geq \frac{n}{100}$ hold with a high probability. Thus, for any $m$, the expected running time until finding the optimum is exponential. For the proof of each part, condition~(2) of Theorem~\ref{simplified-drift} trivially holds as the probability of flipping at least $j$ bits of a solution is at most $2/2^j$, and we only need to show that $\mathrm{E}(X_t-X_{t+1}\mid X_t)$ is upper bounded by a negative constant.
	
\textbf{[Part I: $m\le n^3$]} We consider the interval $[\frac{n}{100},\frac{n}{50}]$. The drift $\mathrm{E}(X_t-X_{t+1} \mid X_t=i)$ (where $\frac{n}{100}<i<\frac{n}{50}$) is calculated as
\begin{align}\label{eq-drift-comp}
&\mathrm{E}(X_t-X_{t+1} \mid X_t=i)=\mathrm{E}^+-\mathrm{E}^-, \quad \text{where} \\
&\mathrm{E}^+=\sum_{x': |x'|_0<i}\mathrm{P}_{\mathrm{mut}}(x,x')\cdot \mathrm{P}(\hat{f}(x') \geq \hat{f}(x)) \cdot (i-|x'|_0),\\
&\mathrm{E}^-=\sum_{x': |x'|_0>i}\mathrm{P}_{\mathrm{mut}}(x,x')\cdot \mathrm{P}(\hat{f}(x') \geq \hat{f}(x))\cdot (|x'|_0-i).
\end{align}
For $\mathrm{E}^-$, we consider the $n-i$ cases where only one 1-bit of $x$ is flipped in mutation. That is, $|x'|_0=i+1$. Next we show that the offspring solution $x'$ is accepted with probability at least $0.07$ (i.e., $\mathrm{P}(\hat{f}(x') \geq \hat{f}(x))\geq 0.07$) by considering two subcases for $m$: (1) $ 4\leq m\le n^3 $ and (2) $1 \leq m\le 3$. In the former case, we mainly apply the Berry-Esseen inequality in Lemma~\ref{lemma_berry_esseen}; in the latter case, the probability $\mathrm{P}(\hat{f}(x') \geq \hat{f}(x))$ can be directly lower bounded.\\
(1) $ 4\leq m\le n^3 $. For $ \frac{n}{100} < k\le \frac{n}{50}$, let $ x^k $ denote a solution with $k$ 0-bits. According to case~(2) of Definition~\ref{def-seg-noise}, we have
\begin{align}\label{eq:mid7}
\mathrm{E}(f^{\mathrm{n}}(x^k))&\!=\!\left(\!\frac{1}{2}\!+\!\frac{1}{\sqrt{n}}\!\right)(n\!-\!k)\!+\!\left(\!\frac{1}{2}\!-\!\frac{1}{\sqrt{n}}\!\right) (3n\!+\!k)=2n\!-\!2\sqrt{n}\!-\!\frac{2k}{\sqrt{n}};\\
\mathrm{Var}(f^{\mathrm{n}}(x^k))&\!=\! \left(\!\frac{1}{2}\!+\!\frac{1}{\sqrt{n}}\!\right)(n\!-\!k)^2 \!+\!\left(\!\frac{1}{2}\!-\!\frac{1}{\sqrt{n}}\!\right) (3n\!+\!k)^2\!-\!\left(2n\!-\!2\sqrt{n}\!-\!\frac{2k}{\sqrt{n}}\right)^2\\
& \!\ge \left(\frac{1}{2}-\frac{1}{\sqrt{n}}\right)\cdot (10n^2+2k^2+4kn)-4n^2\ge n^2,
\end{align}
where the last inequality holds with large enough $n$. Let $Y=f^{\mathrm{n}}(x)-f^{\mathrm{n}}(x')$. Note that $|x|_0 =i \in (\frac{n}{100},\frac{n}{50})$ and $|x'|_0=i+1$. Then, we get that $\mu\coloneqq \mathrm{E}(Y)=\frac{2}{\sqrt{n}}$ and $\sigma^2\coloneqq \mathrm{Var}(Y)\ge 2n^2$. Let $Z=Y-\mu$. Then, we have $\mathrm{E}(Z)=0$, $\mathrm{Var}(Z)=\sigma^2\ge 2n^2$ and
\begin{align}
\rho\coloneqq \mathrm{E}(|Z|^3)&\le 2\left(\frac{1}{4}-\frac{1}{n}\right)\cdot \left(2n+2i+1+\frac{2}{\sqrt{n}}\right)^3\\
&\quad+\left(\left(\frac{1}{2}-\frac{1}{\sqrt{n}}\right)^2+\left(\frac{1}{2}+\frac{1}{\sqrt{n}}\right)^2\right)\cdot \left(1+\frac{2}{\sqrt{n}}\right)^3\le \frac{9n^3}{2},
\end{align}
where the last inequality holds with large enough $n$. Note that $\hat{f}(x)-\hat{f}(x')-\mu$ is the average of $m$ independent random variables, which have the same distribution as $Z$. By Lemma~\ref{lemma_berry_esseen}, we have
\begin{align}\label{mid-eq8}
\mathrm{P}\left(\frac{(\hat{f}(x)-\hat{f}(x')-\mu)\sqrt{m}}{\sigma}\le x\right)-\Phi(x)\geq -\frac{\rho}{2\sigma^3\sqrt{m}},
\end{align}
leading to
\begin{align*}
\mathrm{P}(\hat{f}(x)-\hat{f}(x')\le 0)&=\mathrm{P}(\hat{f}(x)-\hat{f}(x')-\mu\le -\mu)\\
& =\mathrm{P}\left(\frac{(\hat{f}(x)-\hat{f}(x')-\mu)\sqrt{m}}{\sigma}\le -\frac{\mu\sqrt{m}}{\sigma}\right)\\
& \ge \Phi\left(-\frac{\mu\sqrt{m}}{\sigma}\right)-\frac{\rho}{2\sigma^3\sqrt{m}}\\
&\ge \Phi\left(-\frac{\sqrt{2m}}{n\sqrt{n}}\right)-\frac{9}{8\sqrt{2m}},
\end{align*}
where the last inequality is derived by $\mu=\frac{2}{\sqrt{n}}$, $\sigma \geq \sqrt{2}n$ and $\rho \leq \frac{9}{2}n^3$. For $4\le m<n$, $\Phi\left(-\frac{\sqrt{2m}}{n\sqrt{n}}\right)-\frac{9}{8\sqrt{2m}}\geq \Phi(-o(1))-\frac{9}{16\sqrt{2}}\ge 0.07$. For $n\le m\le n^3$, $\Phi\left(-\frac{\sqrt{2m}}{n\sqrt{n}}\right)-\frac{9}{8\sqrt{2m}}\geq \Phi(-\sqrt{2})-o(1)\ge 0.07$. Note that the last inequalities in these two cases both hold with large enough $n$. Thus, we have $\mathrm{P}(\hat{f}(x') \geq \hat{f}(x))\geq 0.07$.\\
(2) $1 \leq m\le 3$. It holds that $\mathrm{P}(\hat{f}(x') \geq \hat{f}(x))\ge (\frac{1}{2}-\frac{1}{\sqrt{n}})^3\ge 0.1 $, since it is sufficient that $f^{\mathrm{n}}(x')$ is always evaluated to $3n+i+1 $ in $m$ independent evaluations. \\
Combining the above two cases, our claim that $\mathrm{P}(\hat{f}(x')\ge \hat{f}(x))\ge 0.07$ holds. Note that $i<n/50$. Thus, we have
$$
\mathrm{E}^-\ge \frac{n-i}{n}\left(1-\frac{1}{n}\right)^{n-1}\cdot 0.07\cdot (i+1-i)\ge \frac{1.2}{50}.
$$
For $\mathrm{E}^+$, we use a trivial upper bound 1 on $\mathrm{P}(\hat{f}(x') \geq \hat{f}(x))$. Then, we have
\begin{align}\label{eq-positive-drift-1}
\mathrm{E}^+& \leq \sum\limits_{x':|x'|_0 <i} \mathrm{P}_{\mathrm{mut}}(x,x')\cdot (i-|x'|_0)\leq \frac{i}{n}\le\frac{1}{50},
\end{align}
where the second inequality can be directly derived from Eq.~(\refeq{eq:mid12}). Thus, the drift satisfies that $$\mathrm{E}(X_t-X_{t+1} \mid X_t=i)=\mathrm{E}^+-\mathrm{E}^- \leq -0.2/50.$$
	
\textbf{[Part II: $m>n^3$]} We consider the interval $[\frac{n}{200},\frac{n}{100}]$, and calculate the drift $\mathrm{E}(X_t-X_{t+1} \mid X_t=i)$ (where $\frac{n}{200}<i<\frac{n}{100}$) by $\mathrm{E}^+-\mathrm{E}^-$ (i.e., Eq.~(\refeq{eq-drift-comp})). For $\mathrm{E}^-$, we show that the probability of accepting the offspring solution $x'$ with $|x'|_0=i+1$ is at least $0.1$. Let $ x^k $ denote a solution with $k$ 0-bits. According to case~(3) of Definition~\ref{def-seg-noise}, we have, for $\frac{n}{200}< k< \frac{n}{100}$,
\begin{align}
&\mathrm{E}(f^{\mathrm{n}}(x^k)-f^{\mathrm{n}}(x^{k+1}))\\
&=\left(1-\frac{1}{n}\right)\cdot 4n-\frac{1}{n}\cdot \left(3(2n+k)^2+3(2n+k)+1\right)\le -8n;
\end{align}
and for $\frac{n}{200}< k\leq \frac{n}{100}$,
\begin{align}
\mathrm{Var}(f^{\mathrm{n}}(x^k)) &=\frac{1}{n}\cdot(2n+k)^6+\left(1-\frac{1}{n}\right)\cdot (4n(n-k))^2-(\mathrm{E}(f^{\mathrm{n}}(x^k)))^2\\
&\le (1/n)\cdot 66n^6+16n^4\le 82n^5.
\end{align}
Then, $\mu:=\mathrm{E}(f^{\mathrm{n}}(x)-f^{\mathrm{n}}(x'))\le -8n$ and  $\sigma^2:=\mathrm{Var}(f^{\mathrm{n}}(x)-f^{\mathrm{n}}(x'))\le 2\cdot 82n^5$. Note that $|f^{\mathrm{n}}(x)-f^{\mathrm{n}}(x')-\mu|\le |f^{\mathrm{n}}(x)-f^{\mathrm{n}}(x')|+|\mu|\le 2(2n+i+1)^3\le 18n^3$. Let $f^{\mathrm{n}}_1(x), f^{\mathrm{n}}_2(x), \ldots, f^{\mathrm{n}}_m(x)$ denote i.i.d. random variables which have the same distribution as $f^{\mathrm{n}}(x)$, and let $f^{\mathrm{n}}_1(x'), f^{\mathrm{n}}_2(x'), \ldots, f^{\mathrm{n}}_m(x')$ denote i.i.d. random variables which have the same distribution as $f^{\mathrm{n}}(x')$. We have
\begin{align}
\mathrm{P}(\hat{f}(x)\ge \hat{f}(x'))
&=\mathrm{P}(m(\hat{f}(x)-\hat{f}(x'))-m\mu\ge -m\mu)\\
&=\mathrm{P}\left(\sum^m_{i=1}(f^{\mathrm{n}}_i(x)-f^{\mathrm{n}}_i(x')-\mu)\ge -m\mu\right)\\
&\le \mathrm{exp}\left(-\frac{m^2\mu^2}{2m\sigma^2+2\cdot 18n^3\cdot (-m\mu)/3}\right)\\
&\le \mathrm{exp}\left(-\frac{m(8n)^2}{2\sigma^2+12n^3\cdot (8n)}\right)< \mathrm{exp}\left(-\frac{n^3\cdot 64n^2}{328n^5+96n^4}\right)\\
&= \mathrm{exp}\left(-\frac{8}{41+o(1)}\right)\le 0.9,
\end{align}
where the second equality holds because $\hat{f}(x)$ and $\hat{f}(x')$ are the average of $m$ independent fitness evaluations of $x$ and $x'$, respectively, the first inequality is by Lemma~\ref{lemma_Bernstein}, the second inequality is by $\mu \leq -8n$, and the third inequality is by $m>n^3$ and $\sigma^2 \leq 2\cdot 82n^5$. Thus, we have $\mathrm{E}^-\ge \frac{n-i}{n}(1-\frac{1}{n})^{n-1}\cdot 0.1\ge \frac{99}{100e}\cdot 0.1\ge 0.03$. For $\mathrm{E}^+$, we still have $\mathrm{E}^+\le \frac{i}{n}\le 0.01$. Thus, the drift satisfies $$\mathrm{E}(X_t-X_{t+1} \mid X_t=i)=\mathrm{E}^+-\mathrm{E}^- \leq -0.02.\vspace{-1.8em}$$\vspace{0.8em}
\end{myproof}

To prove the ineffectiveness of parent populations, we derive a sufficient condition for the exponential running time of the ($\mu$+1)-EA required to solve OneMax under noise, inspired from Theorem~4 in~\cite{friedrich2015benefit}. We generalize their result from additive noise to arbitrary noise. As shown in Lemma~\ref{lemma-population}, the condition intuitively means that when the solution is close to the optimum, the probability of deleting it from the population decreases linearly w.r.t. the population size $\mu$, which is, however, not small enough to make an efficient optimization. Note that for the case where parent populations work in Section~\ref{sec-parent}, the probability of deleting the best solution from the population decreases exponentially w.r.t. $\mu$. Let $poly(n)$ indicate any polynomial of $n$.

\begin{lemma}\label{lemma-population}
For the ($\mu$+1)-EA (where $\mu\in poly(n)$) solving OneMax under noise, if for any solution $y$ with $|y|_1> (599n)/600$ and any set of $\mu$ solutions $Q=\{x^1,x^2,\ldots,x^\mu\}$,
\begin{align}\label{eq:mid5}
\mathrm{P}(f^{\mathrm{n}}(y)<\min\nolimits_{x^i\in Q} f^{\mathrm{n}}(x^i)) \ge 3/(5(\mu+1)),
\end{align}
then the expected running time is exponential.
\end{lemma}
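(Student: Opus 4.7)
The plan is to apply the original negative drift theorem (Theorem~\ref{negative-drift}) to a potential function on the population of the form $X_t = Y_t - h(Z_t)$, where $Y_t = \min_{x\in P}|x|_0$ denotes the number of 0-bits of the current best solution and $Z_t = |\{x\in P : |x|_0 = Y_t\}|$ its multiplicity, following the template established in the proof of Theorem~\ref{theo-parent-population-2} (itself inspired by Theorem~4 of~\cite{friedrich2015benefit}). Here $h:\{1,\dots,\mu\}\to[0,1)$ is strictly increasing with $h(1)=0$, chosen so the telescoping identities over $Z_t$ close in the regime $\mu\in poly(n)$; note that $X_t\le 0$ iff the population contains the optimum. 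I would invoke Theorem~\ref{negative-drift} with $l=n$, $a(l)=0$, $b(l)=n/600-1$ and $\lambda(l)=1$, so that on $a(l)<X_t<b(l)$ the current best $y$ satisfies $|y|_1>599n/600$ and the lemma's hypothesis~\eqref{eq:mid5} becomes available for $y$ paired with the $\mu$-set $Q=(P\setminus\{y\})\cup\{x'\}$, built from the remaining parents and the freshly mutated offspring $x'$ (whose randomness is independent of the noisy (re)evaluations in line~5 of Algorithm~\ref{(mu+1)-EA}).

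Next, I would verify the moment condition~\eqref{Hajek-cond1} by splitting the one-step change into a progress part ($X_{t+1}<X_t$, meaning either $Y$ decreases or $Z$ grows) and a regress part ($X_{t+1}>X_t$, meaning either $Y$ increases with $Z_t=1$ or $Z$ shrinks with $Z_t\ge 2$). The progress part is bounded purely by bit-wise mutation estimates exactly as in Eq.~\eqref{Hajek-E+}: generating a strictly better offspring requires flipping at least one 0-bit of some parent, yielding a geometric tail of ratio at most $b(l)/n\le 1/600$, while generating another copy of the current best contributes at most $h(Z_t+1)-h(Z_t)$ to the moment sum. The regress part is where~\eqref{eq:mid5} plays the role that the explicit symmetric-noise probability $1/2^\mu$ played in Eq.~\eqref{Hajek-E-}: applying~\eqref{eq:mid5} to a current best copy paired with the remaining $\mu$ elements of $P\cup\{x'\}$ shows that some best copy is strictly worst and therefore deleted with probability at least $3/(5(\mu+1))$, which produces a downward moment contribution of $1-h(\mu)$ when $Z_t=1$ and of $h(Z_t-1)-h(Z_t)$ when $Z_t\ge 2$. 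Tuning $h$ so that the ratios $(h(i)-h(i-1))/(h(i+1)-h(i))$ match the per-iteration ratio of progress to regress probabilities, in the same style as the identities used to derive Eq.~\eqref{Hajek-E-}, makes the regress contribution dominate the progress contribution by a margin $-1/p(l)$ with $p(l)\in poly(n)$.

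Finally, I would bound $D(l)=O(1)$ via the same truncated mutation-tail calculation used at the end of the proof of Theorem~\ref{theo-parent-population-2}, and verify the initial condition using Chernoff plus a union bound over the $\mu\in poly(n)$ uniformly random initial solutions: $\mathrm{P}(X_0<b(l))\le\mathrm{P}(Y_0<n/600)\le\mu\cdot e^{-\Omega(n)}=e^{-\Omega(n)}$. Taking $L(l)=e^{cn}$ for a small constant $c>0$ in Eq.~\eqref{Hajek-D} yields $\mathrm{P}(T(l)\le L(l))\le e^{-\lambda(l)b(l)}\cdot L(l)\cdot D(l)\cdot p(l) = e^{-\Omega(n)}$, and since each iteration of the ($\mu$+1)-EA costs $\mu+1\in poly(n)$ evaluations, the expected running time is exponential. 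The main obstacle will be tuning $h$ so that the margin $-1/p(l)$ remains $1/poly(n)$ when $\mu$ is polynomial in $n$: the geometric profile $h(i)=(d^{\mu-1}-d^{\mu-i})/(d^\mu-1)$ with a constant base $d$ used for Theorem~\ref{theo-parent-population-2} was only harmless because $\mu=O(1)$ kept $d^\mu=O(1)$, whereas here one either must shrink $d$ toward $1$ in a $\mu$-dependent way or switch to a gentler (e.g.\ shifted-geometric) profile so that the $Z_t$-dependent fluctuations stay subdominant to the $\Omega(1/\mu)$ regress probability, and checking that the telescoping identity still produces a clean negative upper bound with $p(l)\in poly(n)$ is where the bulk of the algebra will live. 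A secondary subtlety, as in the previous proof, is that~\eqref{eq:mid5} is stated with strict inequality while ties in selection are broken at random, but this only affects absolute constants.
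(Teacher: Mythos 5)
There is a genuine gap here: the two--coordinate potential $X_t=Y_t-h(Z_t)$ with the negative drift theorem cannot be pushed from $\mu=O(1)$ to $\mu\in poly(n)$, and the obstacle you flag at the end is not a tuning issue but a structural one. The telescoping that makes Eq.~\eqref{Hajek-E+} and Eq.~\eqref{Hajek-E-} cancel forces the increments of $h$ to decay geometrically with ratio $d$ equal (up to constant factors) to the ratio between the probability of gaining a copy of the current best and the probability of losing one. Under hypothesis~\eqref{eq:mid5} the loss probability is only $\Omega(Z_t/\mu)$, whereas the gain probability does not vanish with $\mu$: a new solution at level $Y_t$ is produced with probability that can only be bounded by a constant (cloning when a best parent is selected, plus the $O(Y_t/n)=O(1)$ chance that a worse parent drops to level $Y_t$). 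So you are pinned to $d=\Omega(\mu/Z_t)$ at each level, you cannot ``shrink $d$ toward $1$,'' and the smallest increment $h(\mu)-h(\mu-1)$ is at most $e^{-\Omega(\mu)}$ for any admissible profile. This forces $p(l)=e^{\Omega(\mu)}$ in Eq.~\eqref{Hajek-cond1}, and for, say, $\mu=n^2$ (allowed by the lemma) the right-hand side of Eq.~\eqref{Hajek-D} becomes $e^{-\Theta(n)}\cdot L(l)\cdot D(l)\cdot e^{\Omega(n^2)}$, which is vacuous. Rescaling $\lambda(l)$ does not help: to first order condition~\eqref{Hajek-cond1} then just demands a positive expected drift of $X_t$, which fails for the same reason.

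The paper's proof is structurally different and avoids the $Z_t$ bookkeeping entirely. It tracks, for every level $i>a=\frac{599n}{600}$, the expected occupancy $\mathrm{E}(X_i^t)$ of solutions with $i$ one-bits and proves by induction on $t$ the invariant $\mathrm{E}(X_i^t)\le \mu b^{a-i}$ with $b=20$: hypothesis~\eqref{eq:mid5} gives each such solution a per-iteration deletion probability of at least $p=\frac{3}{5(\mu+1)}$, while uniform parent selection caps the expected mutation inflow into level $i$ by $(1-p)b^{a-i}/2$, so the recursion $\mathrm{E}(X_i^{k+1})\le (1-p)b^{a-i}/2+(1-p)\mathrm{E}(X_i^k)$ preserves the invariant for $\mu\ge 2$. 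Markov's inequality then gives $\mathrm{P}(X_n^t\ge 1)\le \mu b^{a-n}$, and a union bound over $b^{(n-a)/2}$ iterations yields the exponential lower bound. The essential point your route misses is that the real competition is between the $\Theta(1/\mu)$ selection probability of any fixed individual and the $\Omega(1/\mu)$ deletion probability; this comparison is clean at the level of expected level-counts but is destroyed when the population is compressed into the single statistic $Z_t$.
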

\begin{myproof}
Let $\xi_t$ denote the population after $t$ iterations of the algorithm. Let $X_i^t$ denote the number of solutions with $i$ 1-bits in $\xi_t$. Let $a=\lfloor \frac{599n}{600} \rfloor$ and $b=20$. We first use an inductive proof to show that \begin{align}\label{eq:mid6}\forall t \geq 0, i> a:\; \mathrm{E}(X_i^t)\le \mu b^{a-i}.\end{align} For $t=0$, due to the uniform initial distribution, we have $ \mathrm{E}(X_i^0)=\mu \cdot (\binom{n}{i}/2^n)$. Note that for $j\ge \frac{2n}{3}$, $\binom{n}{j+1}/\binom{n}{j}=\frac{n-j}{j+1}\le \frac{n/3}{2n/3+1}\le \frac{1}{2}$. Thus, for $i> a$, $\binom{n}{i}/2^n\le \binom{n}{\lceil \frac{3n}{4} \rceil +1}/\binom{n}{\lceil \frac{2n}{3} \rceil}\le (\frac{1}{2})^{n/12} \leq b^{a-n}$, which implies that $\forall i > a, \mathrm{E}(X_i^0)\le \mu b^{a-i}$. Next we assume that $ \forall 0\leq t \leq k,i> a: \mathrm{E}(X_i^t)\le \mu b^{a-i}$, and analyze $\mathrm{E}(X_i^{k+1})$ for $i >a$. Let $\bm{X}^k=(X_0^k,X_1^k,...,X_n^k)$, $\bm{l}=(l_0,l_1,...,l_n)$, $|\bm{l}|_1=\sum^{n}_{i=0} l_i$ and $p=\frac{3}{5(\mu+1)}$. Let $x'$ denote the offspring solution generated in the $(t+1)$-th iteration of the algorithm, and let $x^i$ denote any solution with $i$ 1-bits. Let $\mathrm{P}_{\mathrm{mut}}(x,y)$ denote the probability that $x$ is mutated to $y$ by bit-wise mutation. We use $\mathrm{P}_{\mathrm{mut}}(x^j,x^i)=\sum_{y: |y|_1=i}\mathrm{P}_{\mathrm{mut}}(x^j,y)$ to denote the probability of generating a solution with $i$ 1-bits by mutating any solution with $j$ 1-bits. Then, we have
\begin{align}
&\mathrm{E}(X_i^{k+1}-X_i^k)= \mathrm{E}(\mathrm{E}(X_i^{k+1}-X_i^k\mid \bm{X}^k))\\
&=\sum_{|\bm{l}|_1=\mu}\mathrm{P}(\bm{X}^k=\bm{l})\cdot \\
&\qquad \qquad \Big(\mathrm{P}(|x'|_1=i, \text{$x'$ and any $x^i$ in $\xi_k$ are not deleted}\mid \bm{X}^k=\bm{l})\\
& \qquad \qquad \;-\mathrm{P}(|x'|_1\neq i, \text{one $x^i$ in $\xi_k$ is deleted} \mid \bm{X}^k=\bm{l})\Big)\\
&\le \sum_{|\bm{l}|_1=\mu}\mathrm{P}(\bm{X}^k=\bm{l})\cdot \Big(\mathrm{P}(|x'|_1=i\mid \bm{X}^k=\bm{l})\cdot (1-(l_i+1)p)\\
& \qquad \qquad \qquad \qquad \qquad -(1-\mathrm{P}(|x'|_1= i\mid \bm{X}^k=\bm{l}))\cdot l_i p\Big)\\	
&=\sum_{|\bm{l}|_1=\mu}\mathrm{P}(\bm{X}^k=\bm{l})\cdot \big(\mathrm{P}(|x'|_1=i\mid \bm{X}^k=\bm{l})\cdot (1-p)-l_ip\big)\\
&=\sum_{|\bm{l}|_1=\mu}\mathrm{P}(\bm{X}^k=\bm{l}) \left(\sum\limits_{j=0}^{n} \frac{l_j}{\mu}\cdot \mathrm{P}_{\mathrm{mut}}(x^j,x^i)\cdot (1-p)-l_ip \right)\\
&=(1-p)\sum\limits_{j=0}^{n}\mathrm{P}_{\mathrm{mut}}(x^j,x^i)\cdot \sum\limits_{|\bm{l}|_1=\mu}\mathrm{P}(\bm{X}^k=\bm{l})\frac{l_j}{\mu}-\sum\limits_{|\bm{l}|_1=\mu}\mathrm{P}(\bm{X}^k=\bm{l})l_ip \\
&=(1-p)\sum\limits_{j=0}^{n}\mathrm{P}_{\mathrm{mut}}(x^j,x^i)\cdot \sum\limits_{l_j=0}^{\mu}\mathrm{P}(X_j^k=l_j)\frac{l_j}{\mu}-\sum\limits_{l_i=0}^{\mu}\mathrm{P}(X_i^k=l_i)l_i p\\
&=\frac{1-p}{\mu}\cdot \sum\limits_{j=0}^{n}\mathrm{P}_{\mathrm{mut}}(x^j,x^i)\cdot \mathrm{E}(X_j^k)-p\cdot \mathrm{E}(X_i^k),
\end{align}
where the second equality is because $X^{k+1}_i-X^k_i=1$ iff $|x'|=i$ and $x'$ is added into the population meanwhile the solutions with $i$ 1-bits in $\xi_k$ are not deleted; $X^{k+1}_i-X^k_i=-1$ iff $|x'|\neq i$ and one solution with $i$ 1-bits in $\xi_k$ is deleted, the first inequality is because any solution with $i$ 1-bits is deleted with probability at least $p=\frac{3}{5(\mu+1)}$ by Eq.~(\ref{eq:mid5}), and the fourth equality is because a parent solution is uniformly selected from $\xi_k$ for mutation. We further derive an upper bound on $\frac{1}{\mu}\cdot \sum\limits_{j=0}^{n}\mathrm{P}_{\mathrm{mut}}(x^j,x^i)\cdot \mathrm{E}(X_j^k)$ as follows:
\begin{align}
&\frac{1}{\mu}\cdot \sum\limits_{j=0}^{n}\mathrm{P}_{\mathrm{mut}}(x^j,x^i)\cdot \mathrm{E}(X_j^k)\\
&=\frac{1}{\mu}\cdot \left(\sum\limits_{j=0}^{a}+\sum\limits_{j=a+1}^{i-1}+\sum\limits_{j=i}^{i}+\sum\limits_{j=i+1}^{n}\right)\mathrm{P}_{\mathrm{mut}}(x^j,x^i)\cdot \mathrm{E}(X_j^k)\\
&\le \binom{n-a}{i-a}\left(\frac{1}{n}\right)^{i-a}+\sum\limits_{j=a+1}^{i-1}b^{a-j}\cdot \binom{n-j}{i-j}\left(\frac{1}{n}\right)^{i-j}\\
&\quad + b^{a-i}\cdot \left(\left(1-\frac{1}{n}\right)^n + \sum\limits_{l=1}^{n-i}\binom{n-i}{l}\left(\frac{1}{n}\right)^l \right) +  \sum\limits_{j=i+1}^{n}b^{a-j}\\
&\le \left(\frac{n-a}{n}\right)^{i-a}+b^{a-i} \cdot\left(\sum\limits_{j=a+1}^{i-1}b^{i-j}\left(\frac{n-a}{n}\right)^{i-j}\right.\\
&\qquad \qquad \qquad \qquad \qquad \quad \left.+\frac{1}{e}+\sum\limits_{l=1}^{n-i}\left(\frac{n-a}{n}\right)^l + \sum\limits_{j=i+1}^{n}b^{i-j}\right)\\
&\le b^{a-i}\left(\left(\frac{1}{b} \frac{n}{n-a}\right)^{a-i}+\frac{1}{\frac{n}{b(n-a)}-1}+\frac{1}{e}+\frac{1}{\frac{n}{n-a}-1}+\frac{1}{b-1}\right)\\
&\le b^{a-i}/2,
\end{align}
where the first inequality is derived by applying $\forall j\leq a: \mathrm{P}_{\mathrm{mut}}(x^j,x^i) \leq \mathrm{P}_{\mathrm{mut}}(x^a,x^i) \leq \binom{n-a}{i-a}(\frac{1}{n})^{i-a}$, $\sum^n_{j=0}\mathrm{E}(X_j^k)=\mathrm{E}(\sum^n_{j=0}X_j^k)=\mu$, $\forall j > a: \mathrm{E}(X^k_j) \leq \mu b^{a-j}$ and some simple upper bounds on $\mathrm{P}_{\mathrm{mut}}(x^j,x^i)$ for $j > a$, the third inequality is by $\forall 0<c<1: \sum_{l=1}^{+\infty}c^l= \frac{c}{1-c}=\frac{1}{1/c-1}$, and the last holds with $a=\lfloor\frac{599n}{600}\rfloor$, $b=20$, $i>a$ and large enough $n$. Combining the above two formulas, we get $$\mathrm{E}(X_i^{k+1}-X_i^k)\le (1-p)\cdot b^{a-i}/2-p\cdot \mathrm{E}(X_i^k),$$
which implies that
\begin{align}
\mathrm{E}(X_i^{k+1})&\le  (1-p)\cdot b^{a-i}/2+(1-p)\cdot \mathrm{E}(X_i^k)\\
&\le \left(\frac{1}{2\mu}+1\right)\cdot \frac{5\mu+2}{5(\mu+1)}\cdot \mu b^{a-i}\le \mu b^{a-i},
\end{align}
where the second inequality is by $p=\frac{3}{5(\mu+1)}$ and $\mathrm{E}(X_i^k) \leq \mu b^{a-i}$, and the last inequality holds with $\mu\ge 2$. Thus, our claim that $\forall t\geq 0, \forall i>a: \mathrm{E}(X_i^t)\le \mu b^{a-i}$ holds.

Based on Eq.~(\refeq{eq:mid6}) and Markov's inequality, we get, for any $t\geq 0$, $ \mathrm{P}(X_n^t\ge 1)\le \mathrm{E}(X_n^t)\le \mu b^{a-n} $. Note that $X^t_n$ is the number of optimal solutions in the population after $t$ iterations. Let $T=b^{(n-a)/2}$. Then, the probability of finding the optimal solution $1^n$ in $T$ iterations is
\begin{align}
\mathrm{P}(\exists t\le T,X_n^t\ge 1)&\le \sum\limits_{t=0}^{T}\mathrm{P}(X_n^t\ge 1)\le  T\cdot \mu b^{a-n}=\mu\cdot b^{(a-n)/2},
\end{align}
which is exponentially small for $\mu \in poly(n)$. This implies that the expected running time for finding the optimal solution is exponential.\vspace{0.8em}
\end{myproof}

By verifying the condition of Lemma~\ref{lemma-population}, we prove in~Theorem~\ref{theo-parent-adaptive} that the ($\mu$+1)-EA with $\mu\in poly(n)$ needs exponential time for solving OneMax under segmented noise.

\begin{theorem}\label{theo-parent-adaptive}
For the ($\mu$+1)-EA (where $\mu\in poly(n)$) solving OneMax under segmented noise, the expected running time is exponential.
\end{theorem}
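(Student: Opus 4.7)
The plan is to apply Lemma~\ref{lemma-population} directly; verifying its hypothesis reduces the theorem to a small probabilistic computation. Fix any $y$ with $|y|_1>599n/600$ and any set $Q=\{x^1,\ldots,x^\mu\}$. Since $|y|_0<n/600<n/200$, the solution $y$ falls into segment~(4) of Definition~\ref{def-seg-noise}, so with probability $4/5$ we have $f^{\mathrm{n}}(y)=-n^4-\delta$ for a fresh $\delta\sim\mathcal{U}[0,1]$; in every other branch and in every other segment the noisy fitness is strictly positive. The key idea is that this negative branch of segment~(4) is the unique mechanism producing a noisy fitness below zero, so we obtain a lower bound on $\mathrm{P}(f^{\mathrm{n}}(y)<\min_{x^i\in Q} f^{\mathrm{n}}(x^i))$ by conditioning on this event.

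Let $k$ denote the number of $x^i\in Q$ with $|x^i|_0\leq n/200$. Any $x^i$ outside this group lies in segments~(1)--(3), so $f^{\mathrm{n}}(x^i)>0>f^{\mathrm{n}}(y)$ deterministically once $f^{\mathrm{n}}(y)=-n^4-\delta$. For each of the remaining $k$ solutions, $f^{\mathrm{n}}(x^i)>-n^4-\delta$ holds either because $f^{\mathrm{n}}(x^i)$ takes the positive branch (probability $1/5$) or because $f^{\mathrm{n}}(x^i)=-n^4-\delta_i$ with $\delta_i<\delta$ (probability $(4/5)\delta$), giving conditional probability $1/5+(4/5)\delta$ given $\delta$. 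By independence of the $k+1$ fresh noise evaluations, the joint probability factors, and integrating over $\delta$ yields
\begin{equation*}
\mathrm{P}\!\left(f^{\mathrm{n}}(y)<\min_{x^i\in Q} f^{\mathrm{n}}(x^i)\right)\geq \frac{4}{5}\int_0^1 \!\left(\frac{1}{5}+\frac{4}{5}\delta\right)^{k}\! d\delta = \frac{1-(1/5)^{k+1}}{k+1}\geq \frac{4}{5(\mu+1)}>\frac{3}{5(\mu+1)},
\end{equation*}
where the middle equality uses the substitution $u=1/5+(4/5)\delta$. This verifies the hypothesis of Lemma~\ref{lemma-population} for every $\mu\in poly(n)$, and the claimed exponential running time follows immediately.

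The main obstacle is conceptual rather than technical. One must recognize that the negative branch of segment~(4) is the only source of a comparably small noisy fitness, so to ``beat'' the whole population both $y$ and every dangerous $x^i$ must simultaneously land on this branch, with the uniform $\delta$-comparisons deciding each match. Once the structure is seen, the integration and the independence argument are routine, and the resulting decay rate $\Theta(1/\mu)$ is exactly what Lemma~\ref{lemma-population} was designed to exploit.
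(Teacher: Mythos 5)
Your proposal is correct and follows essentially the same route as the paper: both reduce the theorem to verifying the hypothesis of Lemma~\ref{lemma-population} with the bound $\frac{4}{5(\mu+1)}$, exploiting that only the negative branch of segment~(4) yields a negative noisy fitness and that the continuous uniform $\delta$ makes $y$ strictly minimal among the tied solutions with probability $\Omega(1/\mu)$. The paper packages this as an exchangeability argument conditioned on the number of negative evaluations, whereas you integrate over $\delta$ directly, but the underlying idea and the resulting constant are the same.
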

\begin{myproof}
We apply Lemma~\ref{lemma-population} to prove this result. For any solution $y$ with $|y|_0\leq n/200 $ and $Q=\{x^1,\ldots,x^\mu\}$, let $ A $ denote the event that $f^{\mathrm{n}}(y)<\min_{x^i \in Q}f^{\mathrm{n}}(x^i)$. We will show that $\mathrm{P}(A)\ge \frac{4}{5(\mu+1)}$, which implies that the condition Eq.~(\refeq{eq:mid5}) holds since $|y|_0\leq n/200$ covers the required range of $|y|_1> 599n/600$.

Let $B_l\; (0\leq l\leq \mu)$ denote the event that $l$ solutions in $Q$ are evaluated to have negative noisy fitness values. Note that for any $x$, $f^{\mathrm{n}}(x)<0$ implies that $ |x|_0\leq n/200$, and $ f^{\mathrm{n}}(x)=-n^4-\delta$ where $\delta \sim \mathcal{U}[0,1]$. For $ 0\le l\le \mu $, $$\mathrm{P}(A\mid B_l)\ge \mathrm{P}(f^{\mathrm{n}}(y)<0\mid B_l)\cdot \mathrm{P}(A \mid f^{\mathrm{n}}(y)<0, B_l).$$
Under the conditions $ f^{\mathrm{n}}(y)<0 $ and $B_l$, the noisy fitness values of $y$ and the corresponding $l$ solutions in $Q$ satisfy the same continuous distribution $-n^4-\delta$ where $\delta \sim \mathcal{U}[0,1]$, thus
$$\mathrm{P}(A\mid f^{\mathrm{n}}(y)<0,B_l)\ge \frac{1}{l+1}\ge \frac{1}{\mu+1}.$$ Then, we get $ \mathrm{P}(A\mid B_l)\ge \frac{4}{5}\cdot \frac{1}{\mu+1}$ and $\mathrm{P}(A)=\sum_{l=0}^{\mu} \mathrm{P}(A\mid B_l)\cdot \mathrm{P}(B_l)\ge \frac{4}{5(\mu+1)}$. By Lemma~\ref{lemma-population}, the theorem holds.\vspace{0.8em}
\end{myproof}

Next we show in Theorem~\ref{theo-offspring-adaptive} that using offspring populations is also ineffective in this case. By using offspring populations, the probability of improving the current fitness becomes very small when the solution is in the 2nd segment (i.e., $\frac{n}{100} < |x|_0 \leq \frac{n}{50}$). This is because a fair number of offspring solutions with fitness no better than the current fitness will be generated with a high probability, and the current fitness becomes better only if all these bad offspring solutions and the parent solution are evaluated correctly, the probability of which almost decreases exponentially w.r.t. $\lambda$. Note that for the (1+$\lambda$)-EA solving OneMax under symmetric noise (i.e., Theorem~\ref{theo-population-offpsring}), the effectiveness of using offspring populations is due to the small probability of losing the current fitness, since it requires a fair number of offspring solutions with fitness no worse than the current fitness to be evaluated incorrectly. Therefore, we can see that using offspring populations can generate a fair number of good and bad offspring solutions simultaneously, and whether it will be effective depends on the concrete noisy problem.

\begin{theorem}\label{theo-offspring-adaptive}
For the (1+$ \lambda $)-EA (where $ \lambda\in poly(n) $) solving OneMax under segmented noise, the expected running time is exponential.
\end{theorem}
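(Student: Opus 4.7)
The plan is to apply the negative drift framework to $X_t=|x_t|_0$ on the interval $[a,b]=[n/100,n/50]$ of length $l=n/100=\Omega(n)$, thereby showing that the chain takes super-polynomial time to descend through segment~2 and hence to reach the optimum. A Chernoff bound on $X_0$ under the uniform initialization gives $X_0\geq n/50$ with probability $1-e^{-\Omega(n)}$, so the starting hypothesis of the drift theorem is satisfied.

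The structural heart of the argument is the following observation about segment~2. Let $i=|x_t|_0$, call an offspring $y$ of $x_t$ \emph{bad} if $|y|_0\geq i$ and \emph{good} otherwise, and let $\lambda_{\geq i}$ denote the number of bad offspring. In segment~2, any bad offspring evaluated to its ``high'' value $3n+|y|_0\geq 3n+i$ (probability $1/2-1/n$ per bad offspring, independently) strictly dominates every good offspring --- a high-evaluated good has noisy fitness $3n+|x'|_0<3n+i$, and a low-evaluated offspring has noisy fitness $<n$ --- and it also beats either reevaluation of the parent ($n-i$ or $3n+i$), so it is selected as $z$ and accepted, giving $X_{t+1}\geq i$. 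Hence $\{X_{t+1}<i\}\subseteq\{\text{every bad offspring is evaluated low}\}$, whose conditional probability given the mutations is $(1/2+1/n)^{\lambda_{\geq i}}$. Since a mutation that flips no $0$-bit already yields $|x'|_0\geq i$, one has $p:=\mathrm{P}(|x'|_0\geq i)\geq(1-1/n)^i\geq 49/50$ for $i\leq n/50$, so $\lambda_{\geq i}$ stochastically dominates $\mathrm{Binomial}(\lambda,49/50)$.

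The upward drift $\mathrm{E}^+$ can then be bounded. Write $G_j=(i-|x_j|_0)^+$ and $I_j=\mathbb{1}[|x_j|_0\geq i]$ and note the pointwise identity $G_jI_j=0$. Acceptance of an improvement additionally requires the parent reevaluation to return the low value $n-i$ (otherwise the high reevaluation $3n+i$ is never beaten by a good offspring), so $\mathrm{E}^+\leq (1/2+1/n)\,\mathrm{E}\bigl[\max_{j}G_j\cdot(1/2+1/n)^{\lambda_{\geq i}}\bigr]$. Using $\max_j G_j\leq\sum_j G_j$, the identity $G_j\cdot(1/2+1/n)^{I_j}=G_j$, and the independence of the offspring, this simplifies to $\mathrm{E}^+\leq (1/2+1/n)\cdot\lambda\cdot\mathrm{E}[G_1]\cdot(1-p/2+p/n)^{\lambda-1}$, which, via $\mathrm{E}[G_1]\leq i/n\leq 1/50$ and $1-p/2+p/n\leq 51/100+o(1)$, is bounded by a small absolute constant uniformly in $\lambda\geq 1$ and in fact decays exponentially once $\lambda$ grows. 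For the downward term, any offspring with $|x'|_0=i+1$ (probability $\geq 49/(50e)$ per mutation, by flipping a single $1$-bit) that is evaluated high (probability $1/2-1/n$) is accepted automatically, because its noisy fitness $3n+i+1$ beats both $n-i$ and $3n+i$. For every $\lambda\geq 1$ the probability of producing at least one such offspring is at least a positive constant $c_2$, giving $\mathrm{E}^-\geq c_2$ and thus a constant negative drift $\mathrm{E}(X_t-X_{t+1}\mid X_t=i)\leq -\epsilon$ throughout $(n/100,n/50)$.

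The main obstacle is uniformity in $\lambda$ over the whole polynomial range, because the natural jump bound $\mathrm{P}(|X_{t+1}-X_t|\geq j)\leq \lambda\cdot 2\cdot 2^{-j}$ gives $r(l)=\Theta(\lambda)$, which meets the hypothesis $r(l)=o(l/\log l)$ of Theorem~\ref{simplified-drift} only when $\lambda=o(n/\log n)$. For $\lambda\in poly(n)$ beyond this range, I would instead apply Theorem~\ref{simplified-drift-scaling} with scaling factor $r=\lceil\log_2(2e\lambda)\rceil=O(\log n)$, which satisfies the tail bound $\mathrm{P}(|X_{t+1}-X_t|\geq jr)\leq e^{-j}$ and yields the first-hitting-time lower bound $T\geq e^{\Omega(\epsilon l/r^2)}=e^{\Omega(n/\log^2 n)}$, which is super-polynomial. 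Patching the two regimes covers every $\lambda\in poly(n)$, and since each iteration incurs cost $1+\lambda=poly(n)$, the expected running time is exponential in the sense used throughout the paper.
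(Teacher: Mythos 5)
Your overall strategy (a negative-drift argument across the second segment, built on the observation that a single high-evaluated non-improving offspring dominates every improving offspring, plus the scaling trick to absorb the $\Theta(\lambda)$-sized jump bound) is essentially the paper's, and your product-form computation of $\mathrm{E}^+$ via $\mathrm{E}\bigl[G_j(1/2+1/n)^{I_j}\bigr]=\mathrm{E}[G_j]$ is a clean alternative to the paper's binomial sum. However, there is a genuine gap: the interval $[n/100,n/50]$ is too wide. For $i$ just above $n/100$, flipping a single $0$-bit produces an offspring in the \emph{third} segment, whose noisy fitness is $4n(n-|x'|_0)=\Theta(n^2)$ with probability $1-1/n$; this exceeds every possible noisy fitness in segments 1 and 2 (all at most $3n+n/50$), so such an offspring is selected and accepted no matter how the bad offspring and the parent are evaluated. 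This breaks the inclusion $\{X_{t+1}<i\}\subseteq\{\text{every bad offspring is evaluated low}\}$ on which your $\mathrm{E}^+$ bound rests, and it also breaks the ``automatic acceptance'' step in your $\mathrm{E}^-$ bound (the high-evaluated offspring at $i+1$ need not be the argmax of $Q$). Worse, for $i=n/100+1$ and $\lambda=\omega(1)$, at least one segment-3 offspring appears with probability $1-o(1)$, so the true drift there points \emph{towards} the optimum and condition (1) of the drift theorem genuinely fails on your interval. The paper works on $[n/75,n/50]$ precisely so that entering segment 3 from inside the interval requires flipping at least $n/300$ bits, an event of probability at most $2\lambda\cdot 2^{-n/300}$ whose contribution to $\mathrm{E}^+$ is negligible for $\lambda\in poly(n)$; you need the same (or any strictly interior) cut-off.

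A secondary slip: a ``bad'' offspring with $|y|_0>n/50$ lies in segment 1 and is deterministically evaluated to $n-|y|_0$, so it has no ``high'' value and contributes a factor $1$, not $1/2+1/n$, to the probability that all bad offspring are evaluated low. The exponent in your bound must count only bad offspring that remain in segment 2, whose per-offspring probability is about $(1-1/n)^n+\frac{n-i}{n}(1-1/n)^{n-1}\ge 2/e-o(1)$ rather than your $49/50$; this only changes constants and is harmless once the interval is corrected. Your application of the scaling theorem with $r=\Theta(\log\lambda)=O(\log n)$ is fine and in fact yields a slightly stronger time bound ($e^{\Omega(n/\log^2 n)}$) than the paper's choice $r=\sqrt[3]{n}$.
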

\begin{myproof}
We apply the simplified negative drift theorem with scaling (i.e., Theorem~\ref{simplified-drift-scaling}) to prove this result. Let $X_t=|x|_0$ denote the number of 0-bits of the solution $x$ maintained by the (1+$\lambda$)-EA after running $t$ iterations. We consider the interval $[\frac{n}{75},\frac{n}{50}]$, i.e., $a=\frac{n}{75}$ and $b=\frac{n}{50}$ in Theorem~\ref{simplified-drift-scaling}.

First, we analyze $\mathrm{E}(X_{t}-X_{t+1} \mid X_t=i)$ for $\frac{n}{75}< i<\frac{n}{50}$. As the proof of Theorem~\ref{theo-offspring-population-2}, the drift is divided into two parts: $ \mathrm{E^+}= \sum_{j=0}^{i-1}\mathrm{P}(X_{t+1}=j \mid X_t=i)\cdot (i-j)$ and $ \mathrm{E^-}= \sum_{j=i+1}^n\mathrm{P}(X_{t+1}=j \mid X_t=i)\cdot (j-i)$. \\
To analyze $ \mathrm{E^+} $, we will derive upper bounds on $\mathrm{P}(X_{t+1}= j \mid X_t=i)$ separately for two cases: $ \frac{n}{100}< j< i $ and $ 0\le j\le \frac{n}{100} $.\\
(1) $ \frac{n}{100}< j< i $. Let $ q=\sum_{x': |x'|_0\in \{i,i+1\}}\mathrm{P}_{\mathrm{mut}}(x,x') $, i.e., the probability of generating a solution with $i$ or $i+1$ 0-bits by mutating $x$. Since it is sufficient to flip no bits or flip only one 1-bit, $q \geq (1-\frac{1}{n})^n+\frac{n-i}{n}(1-\frac{1}{n})^{n-1}$. Assume that in the reproduction, exactly $ k $ offspring solutions with $ i $ or $ i+1 $ 0-bits are generated, where $ 0\le k\le \lambda $; it happens with probability $\binom{\lambda}{k}\cdot q^k(1-q)^{\lambda-k}$. For $ k=\lambda $, the solution in the next generation must have at least $ i $ 0-bits (i.e., $X_{t+1} \geq i$). For $ 0\le k<\lambda $, each of the remaining $ \lambda-k $ solutions has $ j $ 0-bits with probability $ \frac{p(j)}{1-q}$, where $ p(j):= \sum_{x':|x'|_0=j}\mathrm{P_{mut}}(x,x')$. Thus, under the condition that exactly $k$ offspring solutions with $i$ or $i+1$ 0-bits are generated, the probability that at least one offspring solution has $ j $ 0-bits is $ 1-(1-\frac{p(j)}{1-q})^{\lambda-k} $. Furthermore, to make the solution in the next generation have $ j $ 0-bits (i.e., $X_{t+1}=j$), it is necessary that the fitness evaluation of these $ k $ offspring solutions and the parent solution $ x $ is not affected by noise, the probability of which is $(\frac{1}{2}+\frac{1}{\sqrt{n}})^{k+1}$. Thus, we have, for $\frac{n}{100}<j<i$,
\begin{align}
&\mathrm{P}(X_{t+1}= j \mid X_t=i)\\
&\le  \sum_{k=0}^{\lambda-1}\binom{\lambda}{k}q^k(1-q)^{\lambda-k} \left(1-\left(1-\frac{p(j)}{1-q}\right)^{\lambda-k}\right)\left(\frac{1}{2}+\frac{1}{\sqrt{n}}\right)^{k+1} \\
&\le  \sum_{k=0}^{\lambda-1}\binom{\lambda}{k}\cdot q^k(1-q)^{\lambda-k} \cdot (\lambda-k)\cdot \frac{p(j)}{1-q}\cdot \left(\frac{1}{2}+\frac{1}{\sqrt{n}}\right)^{k+1} \\
&= p(j) \lambda \left(\frac{1}{2}+\frac{1}{\sqrt{n}}\right)\sum_{k=0}^{\lambda-1}\binom{\lambda-1}{k} \left(q \left(\frac{1}{2}+\frac{1}{\sqrt{n}}\right)\right)^k(1-q)^{\lambda-1-k} \\
&= p(j)\lambda\left(\frac{1}{2}+\frac{1}{\sqrt{n}}\right) \left(1-q\cdot \left(\frac{1}{2}-\frac{1}{\sqrt{n}}\right)\right)^{\lambda-1}\le p(j)  \lambda  \left(\frac{2}{3}\right)^{\lambda},
\end{align}
where the last inequality is by $ q \cdot (\frac{1}{2}-\frac{1}{\sqrt{n}})\ge ((1-\frac{1}{n})^{n}+\frac{n-i}{n}(1-\frac{1}{n})^{n-1}) \cdot (\frac{1}{2}-\frac{1}{\sqrt{n}}) \ge \frac{1}{e} \cdot (1-\frac{1}{n}+\frac{49}{50}) (\frac{1}{2}-\frac{1}{\sqrt{n}})\ge \frac{1}{3}$. For $ \lambda\ge 2 $, $ (\lambda+1)\cdot (\frac{2}{3})^{\lambda+1}/(\lambda \cdot (\frac{2}{3})^{\lambda}) = \frac{\lambda+1}{\lambda}\cdot \frac{2}{3}\le 1$, and note that $ 1\cdot \frac{2}{3}\le 1 $ and $ 2\cdot (\frac{2}{3})^2 \le 1$. Thus, for $\frac{n}{100}<j<i$,
\begin{align}\label{eq:mid8}
\mathrm{P}(X_{t+1}=j \mid X_t=i) \le p(j)=\sum_{x':|x'|_0=j}\mathrm{P_{mut}}(x,x').
\end{align}
(2) $ 0\le j\le \frac{n}{100} $. Because to make $X_{t+1}=j$, it is necessary that at least one offspring solution with $j$ 0-bits is generated, we have
\begin{align}\label{eq:mid9}
\mathrm{P}(X_{t+1}=j \mid X_t=i) &\le 1-\left(1-p(j)\right)^\lambda \le \lambda\cdot p(j) \\
&\leq \lambda \cdot \binom{i}{i-j}\frac{1}{n^{i-j}} \le \frac{2\lambda}{2^{i-j}} \le \frac{2\lambda}{2^{n/300}},
\end{align}
where the last inequality is by $ i>\frac{n}{75} $ and $ j\le \frac{n}{100} $.\\
By applying Eqs.~(\refeq{eq:mid8}) and~(\refeq{eq:mid9}) to $\mathrm{E}^+$, we get
\begin{align}
\mathrm{E^+}
& \le \sum_{n/100<j<i} \sum_{x':|x'|_0=j}\mathrm{P_{mut}}(x,x')\cdot (i-j) +\sum_{0\le j\le n/100} \frac{2\lambda}{2^{n/300}}\cdot (i-j)\\
& \le \frac{i}{n}+\frac{2\lambda}{2^{n/300}}\cdot i\cdot \left(\frac{n}{100}+1\right)\le \frac{i+1}{n},
\end{align}
where the second inequality can be directly derived from Eq.~(\refeq{eq:mid12}), and the last holds with $ \lambda\in poly(n) $ and large enough $n$.\\
For $ \mathrm{E^-} $, we have $\mathrm{E}^-=\sum_{j=i+1}^n\mathrm{P}(X_{t+1}=j \mid X_t=i)\cdot (j-i) \geq \mathrm{P}(X_{t+1}\geq i+1 \mid X_t=i)$. To derive a lower bound on $ \mathrm{P}(X_{t+1}\geq i+1 \mid X_t=i)$, it is sufficient that we consider the case where all the $ \lambda $ offspring solutions have more than $ \frac{n}{100} $ 0-bits (denoted as event $ A $). Suppose that $x'$ is generated from $ x $ by mutation, we have $ \mathrm{P}(|x'|_0\le \frac{n}{100})\le \binom{i}{i-\lceil\frac{n}{100}\rceil}\cdot \frac{1}{n^{i-\lceil\frac{n}{100}\rceil}}\le  \frac{1}{(i-\lceil\frac{n}{100}\rceil)!}\le \frac{1}{2^{i-\lceil\frac{n}{100}\rceil-1}}\le \frac{4}{2^\frac{n}{300}} $. Thus, $ \mathrm{P}(A)\ge (1-\frac{4}{2^\frac{n}{300}})^\lambda \ge \frac{3}{4} $, where the last inequality holds with $ \lambda\in poly(n)$ and large enough $n$. Under the condition of $ A $, if one offspring solution has $i+1$ 0-bits (which happens with probability at least $\frac{n-i}{en}$) and its fitness evaluation is affected by noise (which happens with probability $\frac{1}{2}-\frac{1}{\sqrt{n}}$), it must hold that $ X_{t+1} \ge i+1$. Thus, we have
$$ \mathrm{P}(X_{t+1}\ge i+1 \mid X_t=i)\ge \frac{3}{4}\cdot \frac{n-i}{en}\cdot \left(\frac{1}{2}-\frac{1}{\sqrt{n}}\right)\ge \frac{n-i}{8n},$$ implying $$\mathrm{E^-}\ge  (n-i)/(8n).$$
By calculating $\mathrm{E^+}-\mathrm{E^-}$, we get
$$ \mathrm{E}(X_t-X_{t+1} \mid X_t=i)\le (i+1)/n-(n-i)/(8n)\le -1/10,$$
where the last inequality is by $ i<\frac{n}{50}$. Thus, condition~(1) of Theorem~\ref{simplified-drift-scaling} holds with $\epsilon=\frac{1}{10}$.

Next, we examine conditions~(2) and~(3) of Theorem~\ref{simplified-drift-scaling} by setting $ r=\sqrt[3]{n} $. Using the same analysis as Eq.~(\refeq{eq:mid10}) in the proof of Theorem~\ref{theo-offspring-population-2}, we can get, for $j\ge 1 $,
$$ \mathrm{P}(|X_{t+1}-X_{t}|\ge jr \mid X_t \ge 1) \le \frac{2\lambda}{2^{\lfloor jr \rfloor}} \leq \frac{4\lambda}{(2^{\sqrt[3]{n}})^j} \le \frac{1}{e^j},$$
where the last inequality holds with $ \lambda\in poly(n) $ and large enough $n$. Thus, condition~(2) of Theorem~\ref{simplified-drift-scaling} holds. Since $ r=\sqrt[3]{n} $, $ \epsilon =\frac{1}{10} $ and $l=b-a=\frac{n}{150}$, we have $ 1\le r \le \min\{\epsilon^2 l,\sqrt{\epsilon l/(132\ln (\epsilon l))}\}$ for large enough $ n $, and thus condition~(3) of Theorem~\ref{simplified-drift-scaling} also holds.

Note that $ \epsilon l/(132r^2)=\Theta(\sqrt[3]{n}) $ and $ X_0\ge \frac{n}{50}$ holds with a high probability under the uniform initial distribution. Thus, according to Theorem~\ref{simplified-drift-scaling}, we can conclude that the expected running time is exponential.\vspace{0.8em}
\end{myproof}

In the above proof, we apply the simplified negative drift theorem with scaling (i.e., Theorem~\ref{simplified-drift-scaling}) instead of the simplified negative drift theorem (i.e., Theorem~\ref{simplified-drift}). This is because under the condition of a negative constant drift, the requirement on the probability of jumping towards or away from the target state is relaxed by the theorem with scaling, which is easier to be verified in this studied case.

Finally, we prove in Theorem~\ref{theo-adaptive} that the (1+1)-EA using adaptive sampling can solve OneMax under segmented noise in polynomial time. The employed adaptive sampling strategy is defined as follows.
\begin{definition}[Adaptive Sampling]\label{def_adaptive}
To compare two solutions $x, y$, their noisy fitness is first evaluated once independently. If $3n\le |f^{\mathrm{n}}(x)-f^{\mathrm{n}}(y)|< n^4$, this comparison result is directly used (i.e., the sample size $m=1$); otherwise, each solution will be evaluated $5n^3\ln n$ times independently and the comparison will be based on the average value of these $5n^3 \ln n$ fitness evaluations (i.e., the sample size $m=5n^3 \ln n$).
\end{definition}
Intuitively, when the noisy fitness gap of two solutions is too small or too large, we increase the sample size to make a more confident comparison.

To prove Theorem~\ref{theo-adaptive}, we apply the upper bound on the number of iterations of the (1+1)-EA solving noisy OneMax in~\cite{giessen2014robustness}, as presented in Lemma~\ref{lemma-upper}. Let $x^j$ denote any solution with $j$ 0-bits. Lemma~\ref{lemma-upper} intuitively means that if the probability of recognizing the true better solution in the comparison is large, the running time can be upper bounded. From the proof of Theorem~\ref{theo-adaptive}, we can find why adaptive sampling is effective in this case. In the 2nd segment (or the 4th segment) of the noisy problem, $\mathrm{E}(f^{\mathrm{n}}(x)-f^{\mathrm{n}}(y))$ is positive for two solutions $x$ and $y$ with $f(x)>f(y)$, while in the 3rd segment, it is negative. Thus, a large sample size is better in the 2nd and 4th segments, while a small one is better in the 3rd segment. According to the range of the noisy fitness gap of two solutions in each segment, the adaptive sampling strategy happens to allocate $5n^3\ln n$ evaluations for comparing two solutions in the 2nd segment (or the 4th segment), while allocate only one evaluation in the 3rd segment; thus it works.

\begin{lemma}\cite{giessen2014robustness}\label{lemma-upper}
Suppose there is a positive constant $c \le 1/15$ and some $2<l \le n/2$ such that
\begin{align}
&\forall 0<i\le j:\mathrm{P}(\hat{f}(x^j)<\hat{f}(x^{i-1})) \ge 1-l/n;\\
&\forall l<i\le j:\mathrm{P}(\hat{f}(x^j)<\hat{f}(x^{i-1})) \ge 1-ci/n,
\end{align}
then the (1+1)-EA optimizes noisy OneMax in expectation in $O(n\log n)+n2^{O(l)}$ iterations.
\end{lemma}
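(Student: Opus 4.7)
The plan is a two-phase drift analysis on $X_t=|x|_0$, the number of $0$-bits of the current (1+1)-EA solution after $t$ iterations, splitting the state space at $k=l$.

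\textbf{Far phase ($k>l$).} I would lower-bound $\mathrm{E}(X_t-X_{t+1}\mid X_t=k)$ by decomposing into an improvement and a deterioration contribution. Flipping exactly one $0$-bit (and nothing else) has probability at least $k/(en)$; by the second hypothesis with $i=k$ (applicable since $k>l$), the resulting offspring with $k-1$ $0$-bits is accepted with probability at least $1-ck/n$, giving positive contribution at least $(1-ck/n)\cdot k/(en)\ge 14k/(15en)$. For a deterioration to $k+d$ $0$-bits ($d\ge 1$), the mutation probability is at most $1/d!$ and, again by the second hypothesis (since $i=k+d>l$), the acceptance probability is at most $c(k+d)/n$; summing yields a negative contribution of at most $(c/n)\sum_{d\ge 1}(k+d)d/d!=ce(k+2)/n$. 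With $c\le 1/15$ the net drift is $\Omega(k/n)$, and Theorem~\ref{multiplicative-drift} applied with $V(x)=|x|_0$ gives $O(n\log n)$ expected iterations to first reach $X_t\le l$.

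\textbf{Close phase ($k\le l$).} Only the weaker first hypothesis is available, so deteriorations may be accepted with probability up to $l/n$, while good moves (to $k-1$) occur with probability $p_k\ge (k/(en))(1-l/n)=\Omega(k/n)$ and bad moves (to $k+1$) have $q_k\le l/n$. Treating this as a biased random walk on $\{0,\ldots,l\}$ with $D_k:=T_k-T_{k-1}$, the recurrence $p_k D_k=1+q_k D_{k+1}$ unfolds to $D_k=\sum_{m\ge 0}(1/p_{k+m})\prod_{i=k}^{k+m-1}(q_i/p_i)$. Using $q_i/p_i\le 2el/i$ together with $k(k+1)\cdots(k+m-1)\ge m!$ (for $k\ge 1$) the product is at most $(2el)^m/m!$; combined with $1/p_{k+m}\le 2en/k$, this gives $D_k\le (2en/k)\cdot e^{2el}$, whence $T_l=\sum_{k=1}^{l}D_k=O(n\log l)\cdot 2^{O(l)}=n\cdot 2^{O(l)}$. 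Re-entries into the far regime can be absorbed by a restart argument, adding only a constant factor; combining the two phases yields the total bound $O(n\log n)+n\cdot 2^{O(l)}$.

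The main obstacle is the close phase. Near the optimum the bias towards increasing $k$ can be as large as $el$ per step, and a naive geometric potential $\gamma^{k}$ would force $\gamma=\Theta(l)$ and hence a bound $l^{O(l)}=2^{O(l\log l)}$. The factorial cancellation $(2el)^m/m!$ in $\prod_i(q_i/p_i)$—which relies crucially on the good-move probability scaling as $k/(en)$ rather than a fixed constant—is precisely what brings the bound down to $2^{O(l)}$. A secondary subtle point is gluing the two phases together without losing a factor of $l$; this is cleanest via a single unified potential or the variable drift theorem.
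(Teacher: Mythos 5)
The paper does not prove this lemma at all --- it is imported as a black box from~\cite{giessen2014robustness} --- so there is no in-paper argument to compare against; the comparison is with the cited source. Your reconstruction follows essentially the same route as that source (which in turn refines Droste's analysis of the (1+1)-EA on noisy OneMax): a multiplicative-drift phase for $k>l$, where the second hypothesis caps the acceptance probability of a $d$-bit deterioration and $c\le 1/15$ leaves a net drift of $\Omega(k/n)$, and a hitting-time recurrence for $k\le l$ whose unfolding produces the product $\prod_i (q_i/p_i)$. Your identification of the crux is exactly right: because the improvement probability scales as $k/(en)$, the product telescopes against $k(k+1)\cdots(k+m-1)\ge m!$, which is what turns the naive $2^{O(l\log l)}$ into $n2^{O(l)}$.

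The one step that does not hold as written is the close-phase recurrence $p_kD_k=1+q_kD_{k+1}$: the process is not a birth--death chain, since one mutation can flip several 1-bits, so an accepted deterioration can move from $k$ to $k+d$ with $d\ge 2$. The correct identity reads $\sum_{d\ge 1}p_{k,d}\sum_{j=0}^{d-1}D_{k-j}=1+\sum_{d\ge 1}q_{k,d}\sum_{j=1}^{d}D_{k+j}$; dropping the multi-step improvement terms on the left is conservative, but the multi-step deterioration terms on the right must be kept, and the naive patch of bounding $\sum_{j=1}^{d}D_{k+j}$ by $d\cdot\max_j D_j$ and closing a one-step induction fails for small $k$ (the factor $2e^2l/(k+1)$ exceeds $1$ there). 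The gap is fixable, but only by carrying the telescoping structure through: since $q_{k,d}\le (l/n)/d!$ and $E_j/j$ is non-increasing for the ansatz $\bar D_k=\tfrac{2en}{k}E_k$ with $E_k=\sum_{m\ge 0}(\beta l)^m(k-1)!/(k+m-1)!$ and $\beta=2e^2$, one gets $E_k=1+\tfrac{\beta l}{k}E_{k+1}$ and the downward induction closes with $E_k\le e^{2e^2l}=2^{O(l)}$, i.e., the same conclusion with a slightly worse constant in the exponent. Your closing remark about re-entries into the far regime needs no separate restart argument once $T_k$ is defined as the hitting time of the optimum over the full chain: then $T_{k_0}=\sum_{k\le l}D_k+\sum_{k>l}D_k$, and the second sum is $O(n\log n)$ because $q_i/p_i$ is bounded by a constant below $1$ for $i>l$.
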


\begin{theorem}\label{theo-adaptive}
For the (1+1)-EA solving OneMax under segmented noise, if using adaptive sampling in Definition~\ref{def_adaptive}, the expected running time is $O(n^4\log^2n)$.
\end{theorem}
\begin{myproof}
We apply Lemma~\ref{lemma-upper} to prove this result. We will show that $\mathrm{P}(\hat{f}(x^j)\ge \hat{f}(x^{i-1}))$, for all $0<i\le j$, can be upper bounded by $1/n$. As presented in Definition~\ref{def-seg-noise}, $f^{\mathrm{n}}(x)$ can be divided into four segments according to the range of $|x|_0$; in each segment, $f^{\mathrm{n}}(x)$ has a specific expression. Thus, we analyze $\mathrm{P}(\hat{f}(x^j)\ge \hat{f}(x^{i-1}))$ separately by considering $i$ in each segment.\\
(1) $i>\frac{n}{50}$. It holds that $\forall j\ge i$, $\mathrm{P}(\hat{f}(x^{j})\ge \hat{f}(x^{i-1})) =0$, since $f^{\mathrm{n}}(x^j)$ evaluates to the true OneMax fitness and $f^{\mathrm{n}}(x^{i-1})$ must be larger.\\
(2) $\frac{n}{100}+1< i\le\frac{n}{50}$. If $j>\frac{n}{50}$, we easily verify that $ \mathrm{P}(\hat{f}(x^j)\ge \hat{f}(x^{i-1})) =0 $. If $j\le \frac{n}{50}$, $ |f^{\mathrm{n}}(x^j)-f^{\mathrm{n}}(x^{i-1})|<3n $, and thus, both $ x^j $ and $ x^{i-1} $ will be evaluated $m=5n^3\ln n$ times according to the adaptive sampling strategy. Let $Y=f^{\mathrm{n}}(x^{i-1})-f^{\mathrm{n}}(x^j)$. Based on Eq.~(\refeq{eq:mid7}), we easily get $\mu \coloneqq \mathrm{E}(Y)\ge \frac{2}{\sqrt{n}}$. By Hoeffding's inequality, $|f^{\mathrm{n}}(x^{i-1})-f^{\mathrm{n}}(x^j)|<3n$ and $m=5n^3\ln n$, we have $\mathrm{P}(\hat{f}(x^{j})\ge \hat{f}(x^{i-1}))= \mathrm{P}(\hat{f}(x^{i-1})-\hat{f}(x^{j})-\mu \le -\mu)\le \mathrm{exp}\left(-2m\mu^2/(6n)^2\right)\le \mathrm{exp}\left(-40n^3\ln n/(36n^3)\right)\le 1/n$.\\
(3) $\frac{n}{200}+1< i\le\frac{n}{100}+1$. If $j\ge\frac{n}{100}+1$, it holds that $ \mathrm{P}(\hat{f}(x^j)\ge \hat{f}(x^{i-1})) =0 $, since the noisy fitness in the 3rd segment of Definition~\ref{def-seg-noise} is always larger than that in the 2nd segment. If $j\le \frac{n}{100}$, $3n\le  |f^{\mathrm{n}}(x^j)-f^{\mathrm{n}}(x^{i-1})|< n^4 $, and thus, both $ x^j $ and $ x^{i-1} $ are just evaluated once. Then, we get $\mathrm{P}(\hat{f}(x^j)\ge \hat{f}(x^{i-1})) =1/n$, since $\hat{f}(x^j)\ge \hat{f}(x^{i-1})$ iff $\hat{f}(x^j)=(2n+j)^3$. Note that $\hat{f}$ is just $f^{\mathrm{n}}$ here, since it performs only one evaluation.\\
(4) $0<i\le \frac{n}{200}+1$. If $ j>\frac{n}{200} $, $ 0\le f^{\mathrm{n}}(x^j) \le n^4$. Note that $ f^{\mathrm{n}}(x^{i-1})=n^4(n-i+1) $ or $ f^{\mathrm{n}}(x^{i-1}) \le -n^4$. Thus, $ |f^{\mathrm{n}}(x^j) -f^{\mathrm{n}}(x^{i-1})| \ge n^4 $. If $ j\le \frac{n}{200} $, we can easily derive that $ |f^{\mathrm{n}}(x^j) - f^{\mathrm{n}}(x^{i-1})|<n$ or $ \ge n^4 $. Thus, for any $ j\ge i $, both $ x^j $ and $ x^{i-1} $ will be evaluated $m=5n^3\ln n$ times. Let $Y=f^{\mathrm{n}}(x^{i-1})-f^{\mathrm{n}}(x^j)$. It is easy to verify $\mu \coloneqq \mathrm{E}(Y)\ge n^4/5$ and $\sigma^2 \coloneqq \mathrm{Var}(Y)\le 2n^{10}$. By Chebyshev's inequality, $ \mathrm{P}(\hat{f}(x^{j})\ge \hat{f}(x^{i-1}))\le \frac{\sigma^2}{m\mu^2}\le \frac{1}{n}$, where the last inequality holds with large enough $n$.

Thus, it holds that $\forall 0<i\le j: \mathrm{P}(\hat{f}(x^j)\ge \hat{f}(x^{i-1})) \leq 1/n$ for large enough $n$. Let $ l=15 $ and $c=1/15$. The conditions of Lemma~\ref{lemma-upper} are satisfied and the expected number of iterations is thus $ O(n\log n) + O(n)$. Since a solution is evaluated by at most $1+5n^3\ln n$ times in one iteration, the expected running time is $O(n^4\log^2 n)$.
\end{myproof}

\section{Conclusion}\label{sec-conclusion}

In this paper, we analyze the effectiveness of sampling in noisy evolutionary optimization via rigorous running time analysis. First, we construct a family of artificial noisy problems to show that when sampling with any fixed sample size fails, using parent or offspring populations can work. This complements the previous comparison between populations and sampling on the robustness to noise, which only showed the superiority of sampling over populations. Next, through a carefully constructed artificial noisy problem, we show that when using neither sampling nor populations is effective, adaptive sampling which uses a dynamic sample size can work. This provides some theoretical justification for the good empirical performance of adaptive sampling.

From the analysis, we can find that for an optimization problem under noise, if the true fitness order on some solutions is consistent with their expected noisy fitness order while these two orders are reverse on some other solutions, we should be very careful when using the sampling strategy. This is because a consistent order prefers a large sample size while a reverse order requires a small sample size. In such situations, we may use the adaptive sampling strategy, as shown in Section~\ref{sec-adaptive}.

The analysis in Section~\ref{sec-population} shows that parent and offspring populations can bring robustness to noise by making the probability of losing the current best fitness small. For parent populations, losing the current best fitness requires all non-best solutions in the population to appear better. For offspring populations, a fair number of offspring solutions with fitness no worse than the parent solution will be generated, and losing the current fitness requires all these solutions to appear worse. Both events usually occur with a small probability in noisy environments.

We want to point out that this work is only a start for the running time analysis of sampling in noisy evolutionary optimization. All the findings are derived on very artificial noise models. Future work should concentrate on realistic noise models, e.g., additive Gaussian noise. It would be very interesting to examine whether these findings occur in natural noisy situations. Also it would be desirable to analyze the effectiveness of some standard adaptive sampling strategies theoretically.

\begin{acknowledgements}
We want to thank the anonymous reviewers of GECCO'18, TEvC and Algorithmica for their valuable comments and thank Per Kristian Lehre for helpful discussions. This work was supported by the National Key Research and Development Program of China (2017YFB1003102), the NSFC (61672478, 61876077), the Shenzhen Peacock Plan (KQTD2016112514355531), and the Fundamental Research Funds for the Central Universities.
\end{acknowledgements}

\bibliographystyle{spmpsci}      
\bibliography{ectheory}   

\end{document}